\def\BibTeX{{\rm B\kern-.05em{\sc i\kern-.025em b}\kern-.08em
    T\kern-.1667em\lower.7ex\hbox{E}\kern-.125emX}}
\newtheorem{definition}{Definition}
\newtheorem{lemma}{Lemma}
\newtheorem{theorem}{Theorem}
\newcommand{\method}{\textsc{DPar2}\xspace}
\newcommand{\als}{PARAFAC2-ALS\xspace}
\newcommand{\rdals}{RD-ALS\xspace}
\newcommand{\spartan}{SPARTan\xspace}
\newcommand{\footnoteref}[1]{\textsuperscript{\ref{#1}}}
\newcommand{\hide}[1]{}
\newcommand{\mat}[1]{\mathbf{#1}}
\newcommand{\matt}[1]{\mathbf{#1}^{\text{T}}}
\newcommand*{\QEDB}{\hfill\ensuremath{\Box}}%
\newcommand*\concat{\mathbin{\|}}
\DeclareMathOperator*{\argmin}{argmin}
\newcommand{\T}[1]{\boldsymbol{\mathscr{#1}}}
\newcommand{\subfloat}{\subfigure}
\newcommand\new[1]{\textcolor[RGB]{20,93,160}{#1}}
\begin{document}

\title{DPar2: Fast and Scalable PARAFAC2 Decomposition for Irregular Dense Tensors
}

\author{\IEEEauthorblockN{Jun-Gi Jang}
\IEEEauthorblockA{\textit{Computer Science and Engineering} \\
\textit{Seoul National University}\\
Seoul, Republic of Korea \\
elnino4@snu.ac.kr}
\and
\IEEEauthorblockN{U Kang}
\IEEEauthorblockA{\textit{Computer Science and Engineering} \\
\textit{Seoul National University}\\
Seoul, Republic of Korea \\
ukang@snu.ac.kr}
}

\maketitle

\begin{abstract}
  Given an irregular dense tensor, how can we efficiently analyze it?
An irregular tensor is a collection of matrices whose columns have the same size and rows have different sizes from each other.
PARAFAC2 decomposition is a fundamental tool to deal with an irregular tensor in applications including phenotype discovery and trend analysis.
Although several PARAFAC2 decomposition methods exist,
their efficiency is limited for irregular dense tensors due to the expensive computations involved with the tensor.

In this paper, we propose \method, a fast and scalable PARAFAC2 decomposition method for irregular dense tensors.
\method achieves high efficiency by effectively compressing each slice matrix of a given irregular tensor, careful reordering of computations with the compression results, and exploiting the irregularity of the tensor.
Extensive experiments show that \method is up to $6.0\times$ faster than competitors on real-world irregular tensors while achieving comparable accuracy.
In addition, \method is scalable with respect to the tensor size and target rank. 
\end{abstract}

\begin{IEEEkeywords}
irregular dense tensor, PARAFAC2 decomposition, efficiency
\end{IEEEkeywords}

\section{Introduction}
\label{sec:intro}
\textit{How can we efficiently analyze an irregular dense tensor?}
Many real-world multi-dimensional arrays are represented as irregular dense tensors;
an irregular tensor is a collection of matrices with different row lengths.
For example, stock data can be represented as an irregular dense tensor;
the listing period is different for each stock \textbf{(irregularity)}, and
almost all of the entries of the tensor are observable during the listing period \textbf{(high density)}.
The irregular tensor of stock data is the collection of the stock matrices whose row and column dimension corresponds to time and features (e.g., the opening price, the closing price, the trade volume, etc.), respectively.
In addition to stock data, many real-world data including music song data and sound data are also represented as irregular dense tensors.
Each song can be represented as a slice matrix (e.g., time-by-frequency matrix) whose rows correspond to the time dimension. Then, the collection of songs is represented as an irregular tensor consisting of slice matrices of songs each of whose time length is different.
Sound data are represented similarly.


Tensor decomposition has attracted much attention from the data mining community to analyze tensors~\cite{lin2009metafac,spiegel2011link,jang2021fast,oh2019high,kwon2021slicenstitch,DBLP:conf/cikm/AhnKK21,Ahn2021,DBLP:conf/cikm/AhnSK20,10.1371/journal.pone.0217316,vldbj/Park2019}.
Specifically, PARAFAC2 decomposition has been widely used for modeling irregular tensors in various applications including phenotype discovery~\cite{PerrosPWVSTS17,AfsharPPSHS18}, trend analysis~\cite{helwig2017estimating}, and fault detection~\cite{wise2001application}.
However, existing PARAFAC2 decomposition methods are not fast and scalable enough for irregular dense tensors.
Perros et al.~\cite{PerrosPWVSTS17} improve the efficiency for handling irregular sparse tensors, by exploiting the sparsity patterns of a given irregular tensor.
Many recent works~\cite{AfsharPPSHS18,Ren00H20,afshar2020taste,yin2020logpar} adopt their idea to handle irregular sparse tensors.
However, they are not applicable to irregular \emph{dense} tensors that have no sparsity pattern.
Although Cheng and Haardt~\cite{ChengH19} improve the efficiency of PARAFAC2 decomposition by preprocessing a given tensor, there is plenty of room for improvement in terms of computational costs.
Moreover, there remains a need for fully employing multicore parallelism.
The main challenge to successfully design a fast and scalable PARAFAC2 decomposition method is how to minimize the computational costs involved with an irregular dense tensor and the intermediate data generated in updating factor matrices.

In this paper, we propose \method (\underline{D}ense \underline{PAR}AFAC\underline{2} decomposition), a fast and scalable PARAFAC2 decomposition method for irregular dense tensors.
Based on the characteristics of real-world data, \method compresses each slice matrix of a given irregular tensor using randomized Singular Value Decomposition (SVD).
The small compressed results and our careful ordering of computations considerably reduce the computational costs and the intermediate data.
In addition, \method maximizes multi-core parallelism by considering the difference in sizes between slices.
With these ideas, \method achieves higher efficiency and scalability than existing PARAFAC2 decomposition methods on irregular dense tensors.
Extensive experiments show that \method outperforms the existing methods in terms of speed, space, and scalability, while achieving a comparable fitness, where
the fitness indicates how a method approximates a given data well (see Section~\ref{subsec:setting}).

\begin{figure*}
	\centering
	 \subfloat{\includegraphics[width=0.9\textwidth]{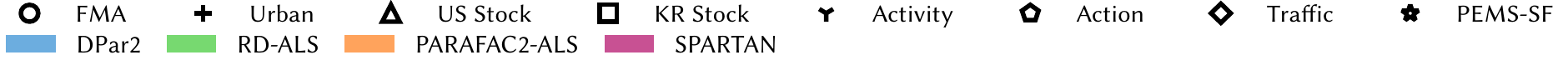}} \\
	\vspace{-3mm}	
	 \setcounter{subfigure}{0}
	 \subfloat[Trade-off]{\includegraphics[width=0.24\textwidth]{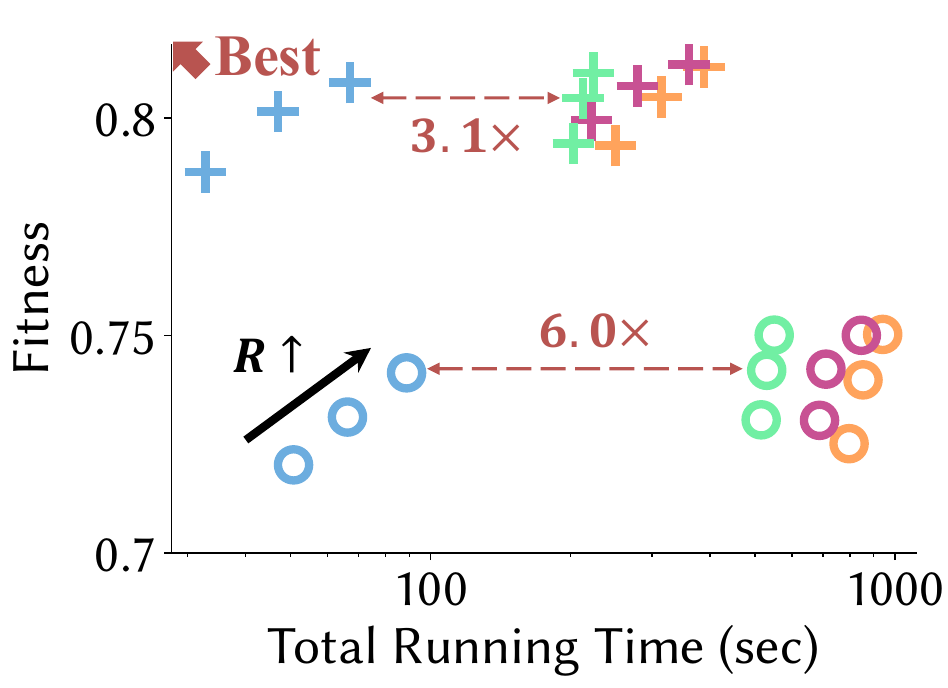}\label{fig:trade_off1}}
	 \subfloat[Trade-off]{\includegraphics[width=0.24\textwidth]{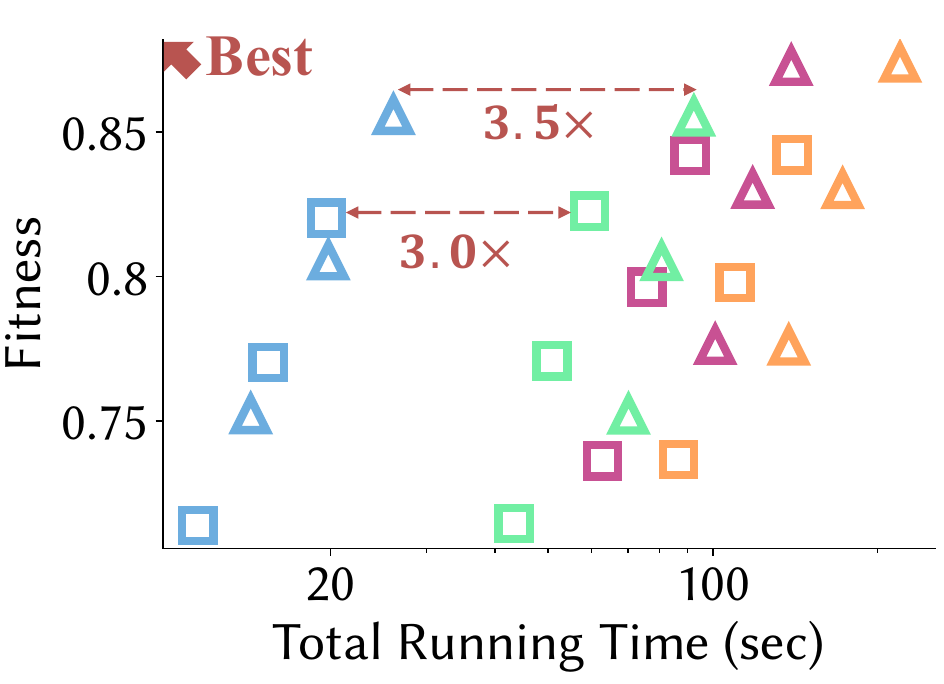}\label{fig:trade_off2}}
	 \subfloat[Trade-off]{\includegraphics[width=0.24\textwidth]{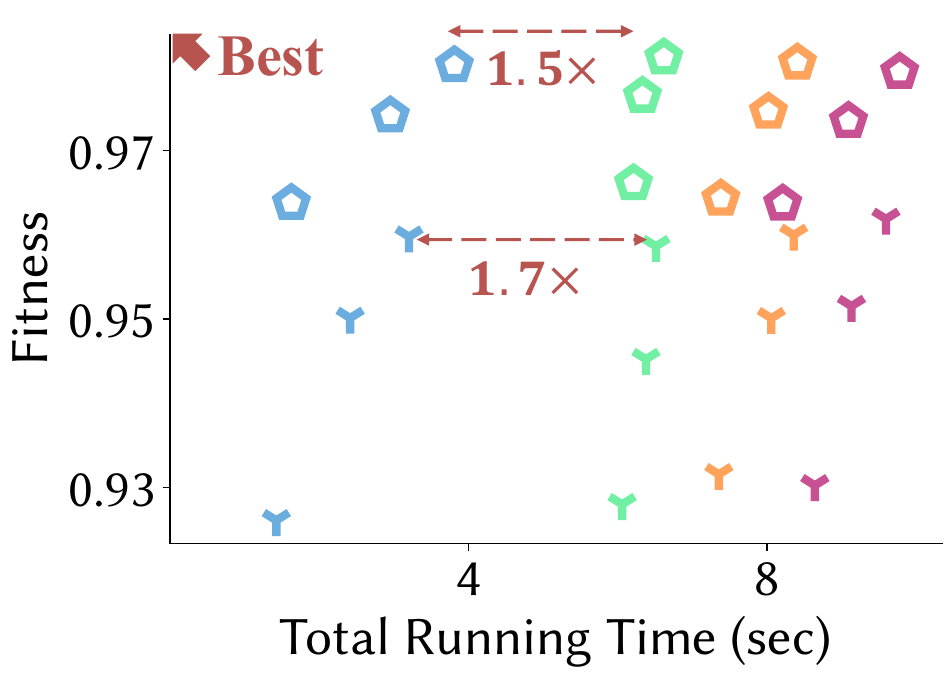}\label{fig:trade_off3}}
	 \subfloat[Trade-off]{\includegraphics[width=0.24\textwidth]{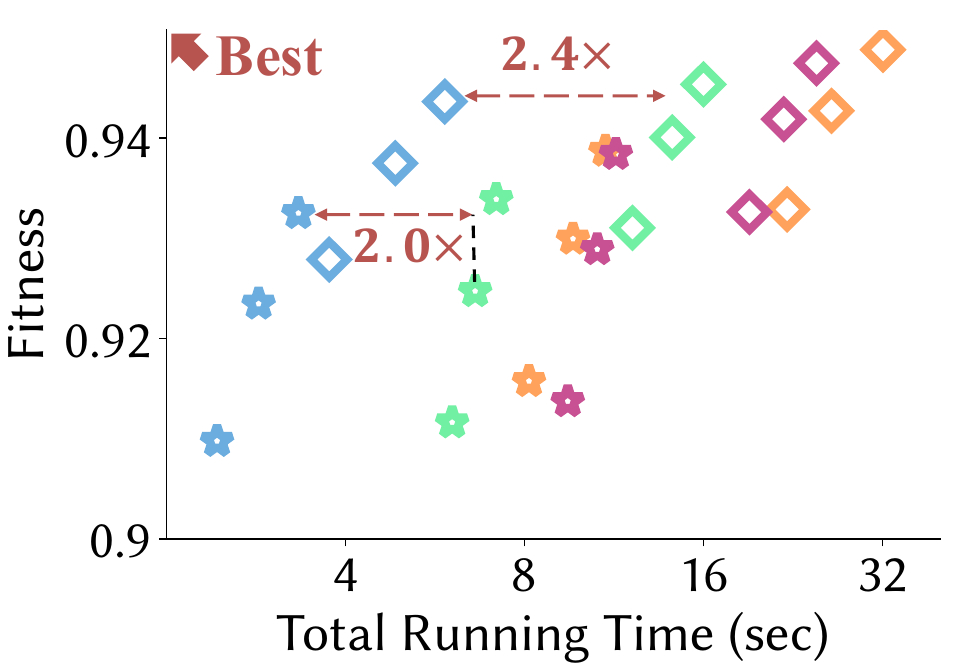}\label{fig:trade_off4}}	 	
	 	 \\
	 	\vspace{-1mm}
	\caption{\textbf{[Best viewed in color]}
	Measurement of the running time and fitness on real-world datasets for three target ranks $R$: $10$, $15$, and $20$.
\method provides the best trade-off between speed and fitness.
	\method is up to $6.0\times$ faster than the competitors while having a comparable fitness.
	}
	\label{fig:performance}
\end{figure*}

The contributions of this paper are as follows.
\begin{itemize*}
	\item {\textbf{Algorithm.} We propose \method, a fast and scalable PARAFAC2 decomposition method for decomposing irregular dense tensors.}
	\item {\textbf{Analysis.} We provide analysis for the time and the space complexities of our proposed method \method.}
	\item \textbf{Experiment.} \method achieves up to $6.0\times$ faster running time than previous PARAFAC2 decomposition methods based on ALS while achieving a similar fitness (see Fig.~\ref{fig:performance}).
	\item \textbf{Discovery.} With \method, we find that the Korean stock market and the US stock market have different correlations (see Fig.~\ref{fig:V_sim}) between features (e.g., prices and technical indicators). We also find similar stocks (see Table~\ref{tab:similar_discovery}) on the US stock market during a specific event (e.g., COVID-19).
\end{itemize*}


In the rest of this paper,
we describe the preliminaries in Section~\ref{sec:prelim}, propose our method \method in Section~\ref{sec:method},
present experimental results in Section~\ref{sec:experiment},
discuss related works in Section~\ref{sec:related},
and conclude in Section \ref{sec:conclusion}.
The code and datasets are available at \textbf{\url{https://datalab.snu.ac.kr/dpar2}}. 

\section{Preliminaries}
\label{sec:prelim}

In this section, we describe tensor notations, tensor operations, Singular Value Decomposition (SVD), and PARAFAC2 decomposition.
We use the symbols listed in Table~\ref{tab:notation}.

\begin{table} [t]
	\centering
	\caption{Symbol description.}
	\label{tab:notation}
	\footnotesize
	\resizebox{\columnwidth}{!}{%
		\begin{tabular}{cl}
			\toprule
			\textbf{Symbol} & \textbf{Description} \\
			\midrule
			$\{ \mat{X}_k \}^{K}_{k=1}$ & irregular tensor of slices $\mat{X}_{k}$ for $k=1,...,K$ \\
			$\mat{X}_{k}$ & slice matrix ($\in I_k \times J$)  \\
			$\mat{X}(i,:)$ & $i$-th row of a matrix $\mat{X}$ \\
			$\mat{X}(:,j)$ & $j$-th column of a matrix $\mat{X}$ \\			
			$\mat{X}(i,j)$ & $(i,j)$-th element of a matrix $\mat{X}$ \\						
			$\mat{X}_{(n)}$ & mode-$n$ matricization of a tensor $\T{X}$ \\
			$\mat{Q}_k$, $\mat{S}_{k}$ & factor matrices of the $k$th slice \\
			$\mat{H}$, $\mat{V}$ & factor matrices of an irregular tensor \\	
			$\mat{A}_k$, $\mat{B}_k$, $\mat{C}_{k}$ & SVD results of the $k$th slice \\
			$\mat{D}$, $\mat{E}$, $\mat{F}$ & SVD results of the second stage \\
			$\mat{F}^{(k)}$ & $k$th vertical block matrix $(\in \mathbb{R}^{R\times R})$ of $\mat{F} (\in \mathbb{R}^{KR \times R})$ \\
			$\mat{Z}_{k}$, $\mat{\Sigma_{k}}$, $\mat{P}_{k}$ & SVD results of $\mat{F}^{(k)}\mat{E}\mat{D}^T\mat{V}\mat{S}_{k}\mat{H}^{T}$ \\
			$R$ & target rank \\	
			$\otimes$ & Kronecker product \\
			$\odot$ & Khatri-Rao product \\			
			$*$ & element-wise product \\
			$\concat$ & horizontal concatenation \\		
			$vec(\cdot)$ & vectorization of a matrix \\
			\bottomrule
		\end{tabular}
	}
\end{table}

\subsection{Tensor Notation and Operation}
We use boldface lowercases (e.g. $\mat{x}$) and boldface capitals (e.g. $\mat{X}$) for vectors and matrices, respectively.
In this paper, indices start at $1$.
An irregular tensor is a 3-order tensor $\T{X}$ whose $k$-frontal slice $\T{X}(:,:,k)$ is $\mat{X}_{k} \in \mathbb{R}^{I_k \times J}$.
We denote irregular tensors by $\{ \mat{X}_k \}^{K}_{k=1}$ instead of $\T{X}$ where $K$ is the number of $k$-frontal slices of the tensor.
An example is described in Fig.~\ref{fig:example}.
We refer the reader to \cite{KoldaB09} for the definitions of tensor operations including Frobenius norm, matricization, Kronecker product, and Khatri-Rao product.

\subsection{Singular Value Decomposition (SVD)}
\label{subsec:SVD}
{Singular Value Decomposition (SVD) decomposes $\mat{A} \in \mathbb{R}^{I \times J}$ to $\mat{X} = \mat{U}\mat{\Sigma}\matt{V}$.
$\mat{U} \in \mathbb{R}^{I \times R}$ is the left singular vector matrix of $\mat{A}$;
$\mat{U}=\begin{bmatrix} \mathbf{u}_1 \cdots \mathbf{u}_r \end{bmatrix}$ is a column orthogonal matrix where $R$ is the rank of $\mat{A}$ and $\mathbf{u}_1$, $\cdots$, $\mathbf{u}_R$ are the eigenvectors of $\mat{A}\matt{A}$.
$\mat{\Sigma}$ is an $R \times R$ diagonal matrix whose diagonal entries are singular values.
The $i$-th singular value $\sigma_i$ is in $\mat{\Sigma}_{i,i}$ where $\sigma_1$ $\geq$ $\sigma_2$ $\geq$ $\cdots$ $\geq$ $\sigma_{R}$ $\geq$ $0$.
$\mat{V} \in \mathbb{R}^{J \times R}$ is the right singular vector matrix of $\mat{A}$; $\mat{V}=\begin{bmatrix} \mathbf{v}_1 \cdots \mathbf{v}_R \end{bmatrix}$ is a column orthogonal matrix where $\mathbf{v}_1$, $\cdots$, $\mathbf{v}_R$ are the eigenvectors of $\matt{A}\mat{A}$.
}

\begin{algorithm} [t]
	\SetNoFillComment
	\caption{Randomized SVD~\protect\cite{HalkoMT11}}
	\label{alg:randomized_svd}
	\begin{algorithmic} [1]
		\small
		\algsetup{linenosize=\small}

		\renewcommand{\algorithmicrequire}{\textbf{Input:}}
		\renewcommand{\algorithmicensure}{\textbf{Output:}}
		    \REQUIRE $\mat{A} \in \mathbb{R}^{I \times J}$ 
		    \ENSURE $\mat{U} \in \mathbb{R}^{I \times R}$, $\mat{S} \in \mathbb{R}^{R \times R}$, and $\mat{V} \in \mathbb{R}^{J \times R}$.
		\renewcommand{\algorithmicrequire}{\textbf{Parameters:}}
		\REQUIRE target rank $R$, and an exponent $q$ \\
		\STATE generate a Gaussian test matrix $\mat{\Omega} \in \mathbb{R}^{J \times (R+s)}$
		\STATE construct $\mat{Y} \leftarrow (\mat{A}\mat{A}^T)^q\mat{A}\mat{\Omega}$
		\STATE $\mat{Q}\mat{R} \leftarrow \mat{Y}$ using QR factorization
		\STATE construct $\mat{B} \leftarrow \mat{Q}^T\mat{A}$
		\STATE $\tilde{\mat{U}}\mat{\Sigma}\mat{V}^T \leftarrow \mat{B}$ using truncated SVD at rank $R$
		\RETURN $\mat{U} = \mat{Q}\tilde{\mat{U}}$, $\mat{\Sigma}$, and $\mat{V}$
	\end{algorithmic}
\end{algorithm}

\textbf{Randomized SVD}.
Many works~\cite{woolfe2008fast,HalkoMT11,clarkson2017low} have introduced efficient SVD methods to decompose a matrix $\mat{A} \in \mathbb{R}^{I \times J}$ by applying randomized algorithms.
We introduce a popular randomized SVD in Algorithm~\ref{alg:randomized_svd}.
Randomized SVD finds a column orthogonal matrix $\mat{Q} \in \mathbb{R}^{I\times (R+s)}$ of $(\mat{A}\mat{A}^T)^q\mat{A}\mat{\Omega}$ using random matrix $\mat{\Omega}$,
constructs a smaller matrix $\mat{B} = \mat{Q}^T\mat{A}$ $(\in \mathbb{R}^{(R+s) \times J})$,
and finally obtains the SVD result $\mat{U}$ $(=\mat{Q}\tilde{\mat{U}})$, $\mat{\Sigma}$, $\mat{V}$ of $\mat{A}$ by computing SVD for $\mat{B}$, i.e., $\mat{B} \approx \tilde{\mat{U}}\mat{\Sigma}\mat{V}^T$.
Given a matrix $\mat{A}$, the time complexity of randomized SVD is $\T{O}(IJR)$ where $R$ is the target rank.

\subsection{PARAFAC2 decomposition}
PARAFAC2 decomposition proposed by Harshman~\cite{harshman1972parafac2} successfully deals with irregular tensors.
The definition of PARAFAC2 decomposition is as follows:
\begin{definition}[PARAFAC2 Decomposition]
	Given a target rank $R$ and a 3-order tensor $\{ \mat{X}_k \}^{K}_{k=1}$ whose $k$-frontal slice is $\mat{X}_{k} \in \mathbb{R}^{I_k \times J}$ for $k=1,...,K$, PARAFAC2 decomposition
	approximates each $k$-th frontal slice $\mat{X}_{k}$ by $\mat{U}_{k}\mat{S}_{k}\mat{V}^T$. $\mat{U}_{k}$ is a matrix of the size $I_k \times R$, $\mat{S}_{k}$ is a diagonal matrix of the size $R \times R$, and $\mat{V}$ is a matrix of the size $J \times R$ which are common for all the slices. \QEDB
\end{definition}

\begin{algorithm} [t]
	\SetNoFillComment
	\caption{PARAFAC2-ALS~\protect\cite{kiers1999parafac2}}
	\label{alg:als}
	\begin{algorithmic} [1]
		\small
		\algsetup{linenosize=\small}

		\renewcommand{\algorithmicrequire}{\textbf{Input:}}
		\renewcommand{\algorithmicensure}{\textbf{Output:}}
		    \REQUIRE $\mat{X}_{k} \in \mathbb{R}^{I_{k} \times J}$ for $k = 1,...,K$
		    \ENSURE $\mat{U}_{k} \in \mathbb{R}^{I_{k} \times R}$, $\mat{S}_{k} \in \mathbb{R}^{R \times R}$ for $k=1,...,K$, and $\mat{V} \in \mathbb{R}^{J \times R}$.
		\renewcommand{\algorithmicrequire}{\textbf{Parameters:}}
		\REQUIRE target rank $R$ \\
		\STATE initialize matrices $\mat{H} \in \mathbb{R}^{R \times R}$, $\mat{V}$, and $\mat{S}_{k}$ for $k=1,...,K$
		\REPEAT \label{alg:line:iter_start}
		\FOR {$k=1,...,K$}
			\STATE compute $\mat{Z}'_{k}\mat{\Sigma'_{k}}\mat{P}_{k}^{'T} \leftarrow \mat{X}_{k}\mat{V}\mat{S}_{k}\mat{H}^{T}$ by performing truncated SVD at rank $R$ \label{alg:line:compute_X1} \\
			\STATE $\mat{Q}_{k} \leftarrow \mat{Z}'_{k}\mat{P}_{k}^{'T}$ \label{alg:line:compute_Qk} \\
		\ENDFOR
		\FOR {$k=1,...,K$}
			\STATE $\mat{Y}_{k} \leftarrow \mat{Q}_{k}^{T}\mat{X}_{k}$ \label{alg:line:compute_X2} \\
		\ENDFOR		
		\STATE construct a tensor $\T{Y} \in \mathbb{R}^{R \times J \times K}$ from slices $\mat{Y}_{k} \in \mathbb{R}^{R \times J}$ for $k=1,...,K$ \label{alg:line:construct_y} \\
		\tcc{running a single iteration of CP-ALS on $\T{Y}$ }
		\STATE $\mat{H} \leftarrow \mat{Y}_{(1)}(\mat{W}\odot \mat{V})(\mat{W}^{T}\mat{W} * 	\mat{V}^{T}\mat{V})^{\dagger}$ \label{alg:line:cp_start}\\
		\STATE $\mat{V} \leftarrow \mat{Y}_{(2)}(\mat{W}\odot \mat{H})(\mat{W}^{T}\mat{W} * 	\mat{H}^{T}\mat{H})^{\dagger}$\\
		\STATE $\mat{W} \leftarrow \mat{Y}_{(3)}(\mat{V}\odot \mat{H})(\mat{V}^{T}\mat{V} * 	\mat{H}^{T}\mat{H})^{\dagger}$ \label{alg:line:cp_end}\\
		\FOR {$k=1,...,K$}
			\STATE $\mat{S}_{k} \leftarrow diag(\mat{W}(k,:))$
		\ENDFOR			\label{alg:line:update_s} \\
		\UNTIL{the maximum iteration is reached, or the error ceases to decrease;} \label{alg:line:iter_end}\\
		\FOR {$k=1,...,K$}
			\STATE $\mat{U}_{k} \leftarrow \mat{Q}_{k}\mat{H}$
		\ENDFOR	
	\end{algorithmic}
\end{algorithm}

The objective function of PARAFAC2 decomposition~\cite{harshman1972parafac2} is given as follows.
\begin{align} \label{eq:PARAFAC2_base}
	\min_{\{\mat{U}_{k}\},\{\mat{S}_{k}\}, \mat{V}} \sum_{k=1}^{K}{|| \mat{X}_{k} - \mat{U}_{k}\mat{S}_{k}\mat{V}^T||_F^2}
	\end{align}
For uniqueness, Harshman~\cite{harshman1972parafac2} imposed the constraint (i.e., $\mat{U}_{k}^T\mat{U} = \Phi$ for all $k$), and
replace $\mat{U}_{k}^T$ with $\mat{Q}_{k}\mat{H}$ where $\mat{Q}_{k}$ is a column orthogonal matrix and $\mat{H}$ is a common matrix for all the slices.
Then, Equation~\eqref{eq:PARAFAC2_base} is reformulated with $\mat{Q}_{k}\mat{H}$:
\begin{align} \label{eq:PARAFAC2_reform}
	\min_{\{\mat{Q}_{k}\},\{\mat{S}_{k}\}, \mat{H}, \mat{V}} \sum_{k=1}^{K}{|| \mat{X}_{k} - \mat{Q}_{k}\mat{H}\mat{S}_{k}\mat{V}^T||_F^2}
	\end{align}
Fig.~\ref{fig:example} shows an example of PARAFAC2 decomposition for a given irregular tensor.
A common approach to solve the above problem is ALS (Alternating Least Square) which iteratively updates a target factor matrix while fixing all factor matrices except for the target.
Algorithm~\ref{alg:als} describes PARAFAC2-ALS.
First, we update each $\mat{Q}_{k}$ while fixing $\mat{H}$, $\mat{V}$, $\mat{S}_{k}$ for $k=1,...,K$ (lines~\ref{alg:line:compute_X1} and~\ref{alg:line:compute_Qk}).
By computing SVD of $\mat{X}_{k}\mat{V}\mat{S}_{k}\mat{H}^T$ as $\mat{Z}'_{k}\mat{\Sigma'}_{k}\mat{P}_{k}^{'T}$, we update $\mat{Q}_{k}$ as $\mat{Z}'_{k}\mat{P}_{k}^{'T}$, which minimizes Equation~\eqref{eq:PARAFAC2_reform} over $\mat{Q}_{k}$~\cite{PerrosPWVSTS17,kiers1999parafac2,golub2013matrix}.
After updating $\mat{Q}_{k}$, the remaining factor matrices $\mat{H}$, $\mat{V}$, $\mat{S}_{k}$ is updated by minimizing the following objective function:
\begin{align} \label{eq:PARAFAC2_reform}
	\min_{\{\mat{S}_{k}\}, \mat{H}, \mat{V}} \sum_{k=1}^{K}{|| \mat{Q}_{k}^T\mat{X}_{k} - \mat{H}\mat{S}_{k}\mat{V}^T||_F^2}
	\end{align}
Minimizing this function is to update $\mat{H}$, $\mat{V}$, $\mat{S}_{k}$ using CP decomposition of a tensor $\T{Y} \in \mathbb{R}^{R \times J \times K}$ whose $k$-th frontal slice is $\mat{Q}_{k}^T\mat{X}_{k}$ (lines~\ref{alg:line:compute_X2} and~\ref{alg:line:construct_y}).
We run a single iteration of CP decomposition for updating them~\cite{kiers1999parafac2} (lines~\ref{alg:line:cp_start} to~\ref{alg:line:update_s}).
$\mat{Q}_{k}$, $\mat{H}$, $\mat{S}_{k}$, and $\mat{V}$ are alternatively updated until convergence.




Iterative computations with an irregular dense tensor require high computational costs and large intermediate data.
\rdals~\cite{ChengH19} reduces the costs by preprocessing a given tensor and performing PARAFAC2 decomposition using the preprocessed result,
but the improvement of \rdals is limited.
Also, recent works successfully have dealt with sparse irregular tensors by exploiting sparsity.
However, the efficiency of their models depends on the \emph{sparsity} patterns of a given irregular tensor, and thus there is little improvement on irregular \emph{dense} tensors.
Specifically, computations with large dense slices $\mat{X}_{k}$ for each iteration are burdensome as the number of iterations increases.
We focus on improving the efficiency and scalability in irregular dense tensors.

\section{Proposed Method}
\label{sec:method}

In this section, we propose \method, a fast and scalable PARAFAC2 decomposition method for irregular dense tensors.

\subsection{Overview}
\label{subsec:overview}

Before describing main ideas of our method,
we present main challenges that need to be tackled.
\begin{itemize}
	\item[C1.] \textbf{Dealing with large irregular tensors.} PARAFAC2 decomposition (Algorithm~\ref{alg:als}) iteratively updates factor matrices (i.e., $\mat{U}_{k}$, $\mat{S}_{k}$, and $\mat{V}$) using an input tensor.
		Dealing with a large input tensor is burdensome to update the factor matrices as the number of iterations increases.
	\item[C2.] \textbf{Minimizing numerical computations and intermediate data.}
	How can we minimize the intermediate data and overall computations?
	\item[C3.] \textbf{Maximizing multi-core parallelism.} How can we parallelize the computations for PARAFAC2 decomposition?
\end{itemize}

\begin{figure}
	\centering
	\includegraphics[width=0.45\textwidth]{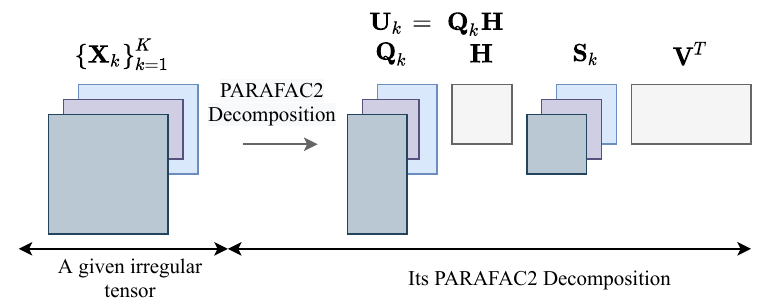}
	\caption{
	Example of PARAFAC2 decomposition.
	Given an irregular tensor $\{ \mat{X}_k \}^{K}_{k=1}$, PARAFAC2 decomposes it into the factor matrices $\mat{H}$, $\mat{V}$, $\mat{Q}_k$, and $\mat{S}_{k}$ for $k=1,..., K$.
	Note that $\mat{Q}_k\mat{H}$ is equal to $\mat{U}_k$.
	}
	\label{fig:example}
\end{figure}

The main ideas that address the challenges mentioned above are as follows:

\begin{itemize}
	\item[I1.] \textbf{Compressing an input tensor using randomized SVD} considerably reduces the computational costs to update factor matrices (Section~\ref{subsec:compression}).
	\item[I2.] \textbf{Careful reordering of computations with the compression results} minimizes the intermediate data and the number of operations (Sections~\ref{subsec:revision} to~\ref{subsec:updating_matrices}).
	\item[I3.] \textbf{Careful distribution of work between threads} enables \method to achieve high efficiency by considering various lengths $I_k$ for $k=1,...,K$ (Section~\ref{subsec:carefulwork}).
\end{itemize}

\begin{figure*}
	\centering
	\includegraphics[width=0.9\textwidth]{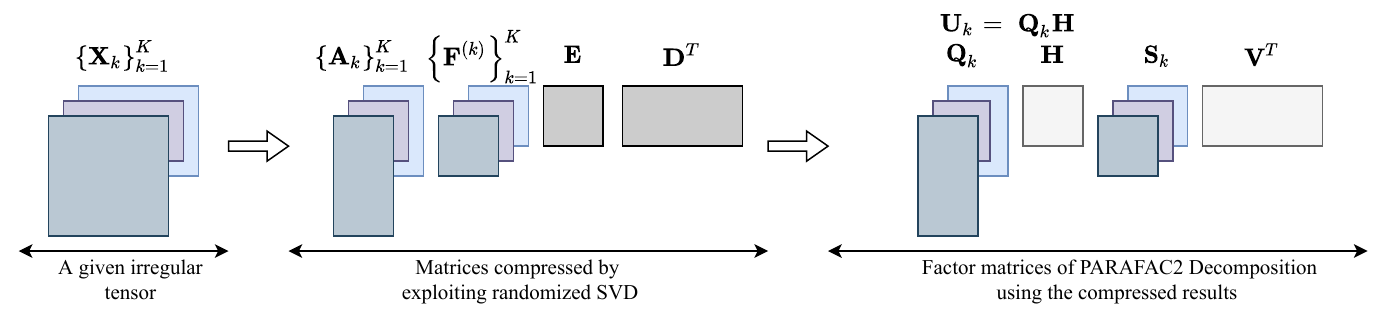}
	\caption{
	Overview of \method.
	Given an irregular tensor $\{ \mat{X}_k \}^{K}_{k=1}$, \method first compresses the given irregular tensor by exploiting randomized SVD.
	Then, \method iteratively and efficiently updates the factor matrices, $\mat{Q}_k$, $\mat{H}$, $\mat{S}_k$, and $\mat{V}$, using only the compressed matrices, to get the result of PARAFAC2 decomposition. 
	}
	\label{fig:overview}
\end{figure*}

As shown in Fig.~\ref{fig:overview}, \method first compresses each slice of an irregular tensor using randomized SVD (Section~\ref{subsec:compression}).
The compression is performed once before iterations, and only the compression results are used at iterations.
It significantly reduces the time and space costs in updating factor matrices.
After compression, \method updates factor matrices at each iteration, by exploiting the compression results (Sections~\ref{subsec:revision} to~\ref{subsec:updating_matrices}).
Careful reordering of computations is required to achieve high efficiency.
Also, by carefully allocating input slices to threads, \method accelerates the overall process (Section~\ref{subsec:carefulwork}).

\vspace{-1mm}
\subsection{Compressing an irregular input tensor}
\label{subsec:compression}
\method (see Algorithm~\ref{alg:dpar2}) is a fast and scalable PARAFAC2 decomposition method based on ALS described in Algorithm~\ref{alg:als}.
The main challenge that needs to be tackled is to minimize the number of heavy computations involved with a given irregular tensor $\{ \mat{X}_k \}^{K}_{k=1}$ consisting of slices $\mat{X}_{k}$ for $k=1,...,K$ (in lines~\ref{alg:line:compute_X1} and~\ref{alg:line:compute_X2} of Algorithm~\ref{alg:als}).
As the number of iterations increases (lines~\ref{alg:line:iter_start} to~\ref{alg:line:iter_end} in Algorithm~\ref{alg:als}), the heavy computations make \als slow.
For efficiency, we preprocess a given irregular tensor into small matrices, and then update factor matrices by carefully using the small ones.

Our approach to address the above challenges is to compress a given irregular tensor $\{ \mat{X}_k \}^{K}_{k=1}$ before starting iterations.
As shown in Fig.~\ref{fig:compression}, our main idea is two-stage lossy compression with randomized SVD for the given tensor:
1) \method performs randomized SVD for each slice $\mat{X}_{k}$ for $k=1,...,K$ at target rank $R$, and
2) \method performs randomized SVD for a matrix, the horizontal concatenation of singular value matrices and right singular vector matrices of slices $\mat{X}_{k}$.
Randomized SVD allows us to compress slice matrices with low computational costs and low errors.

\begin{figure}
	\centering
	\includegraphics[width=0.49\textwidth]{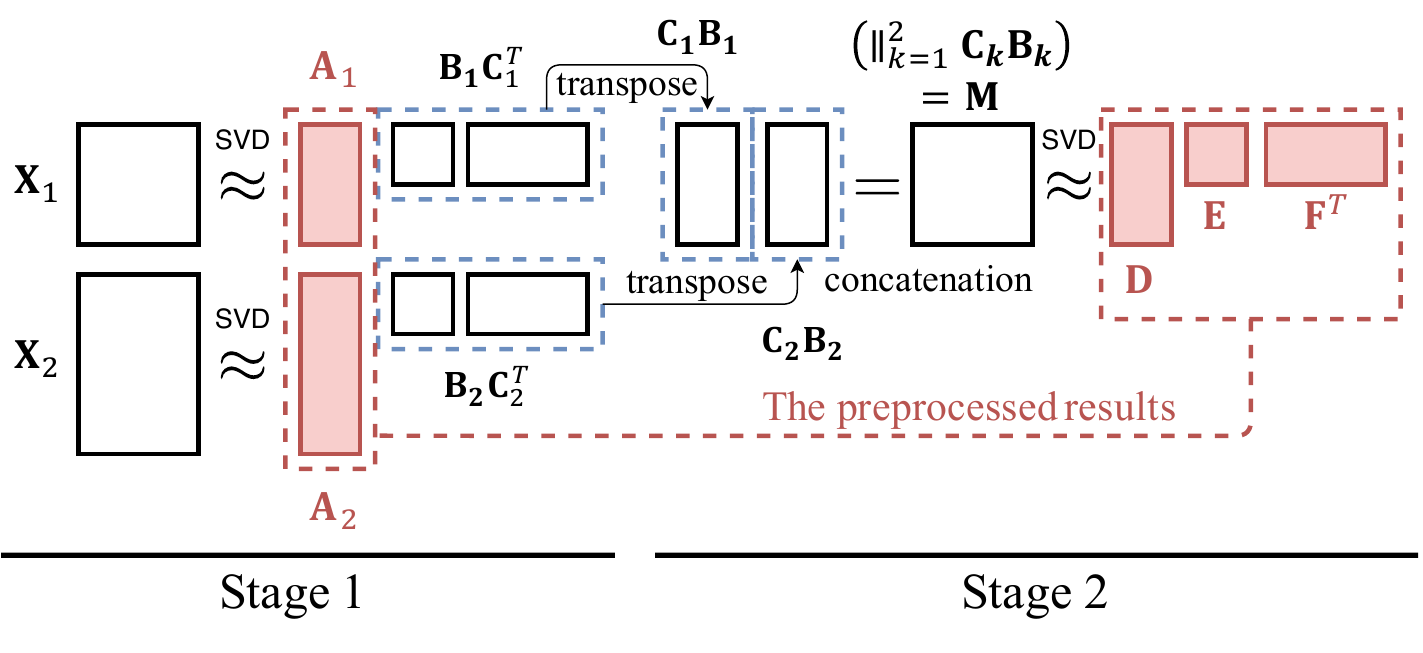}
	 	 \\
	\caption{
	Two-stage SVD for a given irregular tensor.
	In the first stage, \method performs randomized SVD of $\mat{X}_{k}$ for all $k$.
	In the second stage, \method performs randomized SVD of $\mat{M} \in \mathbb{R}^{J\times KR}$ which is the horizontal concatenation of $\mat{C}_k\mat{B}_k$.
	}
	\label{fig:compression}
\end{figure}

\textbf{First Stage.}
In the first stage, \method compresses a given irregular tensor by performing randomized SVD for each slice $\mat{X}_{k}$ at target rank $R$ (line~\ref{alg2:line:fstageSVD} in Algorithm~\ref{alg:dpar2}).
\begin{align}
\label{eq:fstage_svd}
	\mat{X}_{k} \approx \mat{A}_{k}\mat{B}_{k}\mat{C}_{k}^T
\end{align}
where $\mat{A}_{k} \in \mathbb{R}^{I_k \times R}$ is a matrix consisting of left singular vectors, $\mat{B}_{k} \in \mathbb{R}^{R \times R}$ is a diagonal matrix whose elements are singular values, and $\mat{C}_{k} \in \mathbb{R}^{J \times R}$ is a matrix consisting of right singular vectors.

\textbf{Second Stage.}
Although small compressed data are generated in the first step, there is a room to further compress the intermediate data from the first stage.
In the second stage, we compress a matrix $\mat{M} = \concat_{k=1}^{K}{\left(\mat{C}_{k}\mat{B}_{k}\right)}$ which is the horizontal concatenation of $\mat{C}_{k}\mat{B}_{k}$ for $k=1,...,K$.
Compressing the matrix $\mat{M}$ maximizes the efficiency of updating factor matrices $\mat{H}$, $\mat{V}$, and $\mat{W}$ (see Equation~\eqref{eq:PARAFAC2_reform}) at later iterations.
We construct a matrix $\mat{M} \in \mathbb{R}^{J\times KR}$ by horizontally concatenating $\mat{C}_{k}\mat{B}_{k}$ for $k=1,...,K$ (line~\ref{alg2:line:concat_m} in Algorithm~\ref{alg:dpar2}).
Then, \method performs randomized SVD for $\mat{M}$ (line~\ref{alg2:line:sstageSVD} in Algorithm~\ref{alg:dpar2}):
\begin{align}
\label{eq:matricization_svd}
	\mat{M} = [\mat{C}_{1}\mat{B}_{1}; \cdots ; \mat{C}_{K}\mat{B}_{K}] = \concat_{k=1}^{K}{\left(\mat{C}_{k}\mat{B}_{k}\right)} \approx \mat{D}\mat{E}\mat{F}^{T}
\end{align}
where $\mat{D} \in \mathbb{R}^{J \times R}$ is a matrix consisting of left singular vectors, $\mat{E} \in \mathbb{R}^{R \times R}$ is a diagonal matrix whose elements are singular values, and $\mat{F} \in \mathbb{R}^{KR \times R}$ is a matrix consisting of right singular vectors.

With the two stages, we obtain the compressed results $\mat{D}$, $\mat{E}$, $\mat{F}$, and $\mat{A}_{k}$ for $k=1,...,K$.
Before describing how to update factor matrices, we re-express the $k$-th slice $\mat{X}_{k}$ by using the compressed results:
\begin{align}
\label{eq:Xk_final}
	\mat{X}_{k} \approx \mat{A}_{k}\mat{F}^{(k)}\mat{E}\mat{D}^T
\end{align}
where $\mat{F}^{(k)} \in \mathbb{R}^{R\times R}$ is the $k$th vertical block matrix of $\mat{F}$:
\begin{align}
\label{eq:verticalF}
	\mat{F} = \begin{bmatrix} \mat{F}^{(1)} \\ \vdots \\ \mat{F}^{(K)} \end{bmatrix}
\end{align}
Since $\mat{C}_{k}\mat{B}_{k}$ is the $k$th horizontal block of $\mat{M}$ and $\mat{D}\mat{E}\mat{F}^{(k)T}$ is the $k$th horizontal block of $\mat{D}\mat{E}\mat{F}^T$,
$\mat{B}_{k}\mat{C}_{k}^T$ corresponds to $\mat{F}^{(k)}\mat{E}\mat{D}^T$.
Therefore, we obtain Equation~\eqref{eq:Xk_final} by replacing $\mat{B}_{k}\mat{C}_{k}^T$ with $\mat{F}^{(k)}\mat{E}\mat{D}^T$ from Equation~\eqref{eq:fstage_svd}.

In updating factor matrices, we use $\mat{A}_{k}\mat{F}^{(k)}\mat{E}\mat{D}^T$ instead of $\mat{X}_{k}$.
The two-stage compression lays the groundwork for efficient updates.

\subsection{Overview of update rule}
\label{subsec:revision}

Our goal is to efficiently update factor matrices, $\mat{H}$, $\mat{V}$, and $\mat{S}_{k}$ and $\mat{Q}_{k}$ for $k=1,...,K$, using the compressed results $\mat{A}_{k}\mat{F}^{(k)}\mat{E}\mat{D}^T$.
The main challenge of updating factor matrices is to minimize numerical computations and intermediate data by exploiting the compressed results obtained in Section~\ref{subsec:compression}.
A naive approach would reconstruct $\tilde{\mat{X}}_{k} = \mat{A}_{k}\mat{F}^{(k)}\mat{E}\mat{D}^T$ from the compressed results, and then update the factor matrices.
However, this approach fails to improve the efficiency of updating factor matrices.
We propose an efficient update rule using the compressed results to 1) find $\mat{Q}_{k}$ and $\mat{Y}_{k}$ (lines~\ref{alg:line:compute_Qk} and~\ref{alg:line:compute_X2} in Algorithm~\ref{alg:als}), and 2) compute a single iteration of CP-ALS (lines~\ref{alg:line:cp_start} to~\ref{alg:line:cp_end} in Algorithm~\ref{alg:als}).

There are two differences between our update rule and PARAFAC2-ALS (Algorithm~\ref{alg:als}).
First, we avoid explicit computations of $\mat{Q}_{k}$ and $\mat{Y}_{k}$.
Instead, we find small factorized matrices of $\mat{Q}_{k}$ and $\mat{Y}_{k}$, respectively, and then exploit the small ones to update $\mat{H}$, $\mat{V}$, and $\mat{W}$.
The small matrices are computed efficiently by exploiting the compressed results $\mat{A}_{k}\mat{F}^{(k)}\mat{E}\mat{D}^T$ instead of $\mat{X}_{k}$.
The second difference is that \method obtains $\mat{H}$, $\mat{V}$, and $\mat{W}$ using the small factorized matrices of $\mat{Y}_{k}$.
Careful ordering of computations with them considerably reduces time and space costs at each iteration.
We describe how to find the factorized matrices of $\mat{Q}_{k}$ and $\mat{Y}_{k}$ in Section~\ref{subsubsec:find_Q}, and how to update factor matrices in Section~\ref{subsec:updating_matrices}. 


\subsection{Finding the factorized matrices of $\mat{Q}_{k}$ and $\mat{Y}_{k}$}
\label{subsubsec:find_Q}
The first goal of updating factor matrices is to find the factorized matrices of $\mat{Q}_{k}$ and $\mat{Y}_{k}$ for $k=1,...,K$, respectively.
In Algorithm~\ref{alg:als}, finding $\mat{Q}_{k}$ and $\mat{Y}_{k}$ is expensive due to the computations involved with $\mat{X}_{k}$ (lines~\ref{alg:line:compute_X1} and~\ref{alg:line:compute_X2} in Algorithm~\ref{alg:als}).
To reduce the costs for $\mat{Q}_{k}$ and $\mat{Y}_{k}$, our main idea is to exploit the compressed results $\mat{A}_{k}$, $\mat{D}$, $\mat{E}$, and $\mat{F}^{(k)}$, instead of $\mat{X}_{k}$.
Additionally, we exploit the column orthogonal property of $\mat{A}_{k}$, i.e., $\mat{A}_{k}^T \mat{A}_{k} = \mat{I}$, where $\mat{I}$ is the identity matrix.


We first re-express $\mat{Q}_{k}$ using the compressed results obtained in Section~\ref{subsec:compression}. 
\method reduces the time and space costs for $\mat{Q}_{k}$ by exploiting the column orthogonal property of $\mat{A}_{k}$.
First, we express $\mat{X}_{k}\mat{V}\mat{S}_{k}\mat{H}^{T}$ as $\mat{A}_{k}\mat{F}^{(k)}\mat{E}\mat{D}^T\mat{V}\mat{S}_{k}\mat{H}^{T}$ by replacing $\mat{X}_{k}$ with $\mat{A}_{k}\mat{F}^{(k)}\mat{E}\mat{D}^T$.
Next, we need to obtain left and right singular vectors of $\mat{A}_{k}\mat{F}^{(k)}\mat{E}\mat{D}^T$ $\mat{V}\mat{S}_{k}\mat{H}^{T}$.
A naive approach is to compute SVD of $\mat{A}_{k}\mat{F}^{(k)}\mat{E}\mat{D}^T\mat{V}\mat{S}_{k}\mat{H}^{T}$, but there is a more efficient way than this approach.
Thanks to the column orthogonal property of $\mat{A}_{k}$, \method performs SVD of $\mat{F}^{(k)}\mat{E}\mat{D}^T\mat{V}\mat{S}_{k}$ $\mat{H}^{T}$ $ \in \mathbb{R}^{R\times R}$, not $\mat{A}_{k}\mat{F}^{(k)}\mat{E}\mat{D}^T\mat{V}\mat{S}_{k}\mat{H}^{T} \in \mathbb{R}^{I_k\times R}$, at target rank $R$ (line~\ref{alg2:line:compute_X1} in Algorithm~\ref{alg:dpar2}):
\begin{align}
\label{eq:q_k}
	\mat{F}^{(k)}\mat{E}\mat{D}^T\mat{V}\mat{S}_{k}\mat{H}^{T} \overset{\text{SVD}}{=} \mat{Z}_{k}\mat{\Sigma}_{k}\mat{P}_{k}^{T}
\end{align}
where $\mat{\Sigma}_{k}$ is a diagonal matrix whose entries are the singular values of $\mat{F}^{(k)}\mat{E}\mat{D}^T\mat{V}\mat{S}_{k}\mat{H}^{T}$, the column vectors of $\mat{Z}_{k}$ and $\mat{P}_{k}$ are the left singular vectors and the right singular vectors of $\mat{F}^{(k)}\mat{E}\mat{D}^T\mat{V}\mat{S}_{k}\mat{H}^{T}$, respectively.
Then, we obtain the factorized matrices of $\mat{Q}_{k}$ as follows:
\begin{align}
\mat{Q}_{k} = \mat{A}_{k}\mat{Z}_{k}\mat{P}_{k}^{T}
\end{align}
where $\mat{A}_{k}\mat{Z}_{k}$ and $\mat{P}_{k}$ are the left and the right singular vectors of $\mat{A}_{k}\mat{F}^{(k)}\mat{E}\mat{D}^T\mat{V}\mat{S}_{k}\mat{H}^{T}$, respectively.
We avoid the explicit construction of $\mat{Q}_{k}$, and use $\mat{A}_{k}\mat{Z}_{k}\mat{P}_{k}^{T}$ instead of $\mat{Q}_{k}$.
Since $\mat{A}_{k}$ is already column-orthogonal, we avoid performing SVD of $\mat{A}_{k}\mat{F}^{(k)}\mat{E}\mat{D}^T\mat{V}\mat{S}_{k}\mat{H}^{T}$, which are much larger than $\mat{F}^{(k)}\mat{E}\mat{D}^T\mat{V}\mat{S}_{k}\mat{H}^{T}$.

\begin{algorithm} [t]
	\SetNoFillComment
	\caption{\method}
	\label{alg:dpar2}
	\begin{algorithmic} [1]
		\small
		\algsetup{linenosize=\small}

		\renewcommand{\algorithmicrequire}{\textbf{Input:}}
		\renewcommand{\algorithmicensure}{\textbf{Output:}}
		    \REQUIRE $\mat{X}_{k} \in \mathbb{R}^{I_{k} \times J}$ for $k = 1,...,K$
		    \ENSURE $\mat{U}_{k} \in \mathbb{R}^{I_{k} \times R}$, $\mat{S}_{k} \in \mathbb{R}^{R \times R}$ for $k=1,...,K$, and $\mat{V} \in \mathbb{R}^{J \times R}$.
		\renewcommand{\algorithmicrequire}{\textbf{Parameters:}}
		\REQUIRE target rank $R$ \\
		\STATE initialize matrices $\mat{H} \in \mathbb{R}^{R \times R}$, $\mat{V}$, and $\mat{S}_{k}$ for $k=1,...,K$ \\
				\tcc{Compressing slices in parallel}
		\FOR {$k=1,...,K$} \label{alg2:line:start_rand}
			\STATE compute $\mat{A}_{k}\mat{\mat{B}_{k}}\mat{C}_{k}^{T} \leftarrow \text{SVD}(\mat{X}_{k})$ by performing randomized SVD at rank $R$ \label{alg2:line:fstageSVD}  \\
		\ENDFOR		\label{alg2:line:end_rand}
			\STATE $\mat{M} \leftarrow \concat_{k=1}^{K}{\left(\mat{C}_{k}\mat{B}_{k}\right)}$ \label{alg2:line:concat_m} \\
			\STATE compute $\mat{D}\mat{E}\mat{F}^T \leftarrow \text{SVD}(\mat{M})$ by performing randomized SVD at rank $R$ \label{alg2:line:sstageSVD} \\	
				\tcc{Iteratively updating factor matrices}			
		\REPEAT \label{alg2:line:iter_start}
		\FOR {$k=1,...,K$} \label{alg2:line:start_iter}
			\STATE compute $\mat{Z}_{k}\mat{\Sigma}_{k}\mat{P}_{k}^{T} \leftarrow \text{SVD}(\mat{F}^{(k)}\mat{E}\mat{D}^T\mat{V}\mat{S}_{k}\mat{H}^{T})$ by performing SVD at rank $R$ \label{alg2:line:compute_X1} \\
		\ENDFOR \label{alg2:line:end_fiter}\\
		\tcc{no explicit computation of $\mat{Y}_{k}$}				
		\FOR {$k=1,...,K$} \label{alg2:line:start_siter}
			\STATE $\mat{Y}_{k} \leftarrow \mat{P}_{k}\mat{Z}_{k}^T\mat{F}^{(k)}\mat{E}\mat{D}^T$ \label{alg2:line:compute_X2} \\
		\ENDFOR	\label{alg2:line:end_siter}	\\
		\tcc{running a single iteration of CP-ALS on $\T{Y}$ }
		\STATE compute $\mat{G}^{(1)} \leftarrow \mat{Y}_{(1)}(\mat{W}\odot \mat{V})$ based on Lemma~\ref{lemma:mttkrp_mode1} \label{alg2:line:compute_g1} \\
		\STATE $\mat{H} \leftarrow \mat{G}^{(1)}(\mat{W}^{T}\mat{W} * 	\mat{V}^{T}\mat{V})^{\dagger}$ \Comment*[f]{\textbf{Normalize $\mat{H}$}}\label{alg2:line:update_h}\\
		\STATE compute $\mat{G}^{(2)} \leftarrow \mat{Y}_{(2)}(\mat{W}\odot \mat{H})$ based on Lemma~\ref{lemma:mttkrp_mode2}  \label{alg2:line:compute_g2} \\
		\STATE $\mat{V} \leftarrow \mat{G}^{(2)}(\mat{W}^{T}\mat{W} * 	\mat{H}^{T}\mat{H})^{\dagger}$ \label{alg2:line:cp_start} \Comment*[f]{\textbf{Normalize $\mat{V}$}} \label{alg2:line:update_v}\\
		\STATE compute $\mat{G}^{(3)} \leftarrow \mat{Y}_{(3)}(\mat{V}\odot \mat{H})$ based on Lemma~\ref{lemma:mttkrp_mode3} \label{alg2:line:compute_g3} \\
		\STATE $\mat{W} \leftarrow \mat{G}^{(3)}(\mat{V}^{T}\mat{V} * 	\mat{H}^{T}\mat{H})^{\dagger}$ \label{alg2:line:update_w}\\				
		\FOR {$k=1,...,K$}
			\STATE $\mat{S}_{k} \leftarrow diag(\mat{W}(k,:))$ \label{alg2:line:update_s1}
		\ENDFOR			\label{alg2:line:update_s} \\
		\UNTIL{the maximum iteration is reached, or the error ceases to decrease;} \label{alg2:line:iter_end}\\
		\FOR {$k=1,...,K$}
			\STATE $\mat{U}_{k} \leftarrow \mat{A}_{k}\mat{Z}_{k}\mat{P}_{k}^{T}\mat{H}$ \label{alg2:line:obtain_U} \\
		\ENDFOR	
	\end{algorithmic}
\end{algorithm}

Next, we find the factorized matrices of $\mat{Y}_{k}$.
\method re-expresses $\mat{Q}_{k}^T\mat{X}_{k}$ (line~\ref{alg:line:compute_X2} in Algorithm~\ref{alg:als}) as $\mat{Q}_{k}^T\mat{A}_{k}\mat{F}^{(k)}\mat{E}\mat{D}^T$ using Equation~\eqref{eq:Xk_final}.
Instead of directly computing $\mat{Q}_{k}^T\mat{A}_{k}\mat{F}^{(k)}\mat{E}\mat{D}^T$,
we replace $\mat{Q}_{k}^T$ with $\mat{P}_{k}\mat{Z}_{k}^T\mat{A}_{k}^T$.
Then, we represent $\mat{Y}_{k}$ as the following expression (line~\ref{alg2:line:compute_X2} in Algorithm~\ref{alg:dpar2}):
\begin{align*}
	\mat{Y}_{k} \leftarrow  & \mat{Q}_{k}^T\mat{A}_{k}\mat{F}^{(k)}\mat{E}\mat{D}^T
	  = \mat{P}_{k}\mat{Z}_{k}^T\mat{A}_{k}^T\mat{A}_{k}\mat{F}^{(k)}\mat{E}\mat{D}^T\\
	&= \mat{P}_{k}\mat{Z}_{k}^T\mat{F}^{(k)}\mat{E}\mat{D}^T
\end{align*}
Note that we use the property $\mat{A}_{k}^T\mat{A}_{k} = \mat{I}_{R \times R}$, where $\mat{I}_{R \times R}$ is the identity matrix of size $R\times R$,
for the last equality.
By exploiting the factorized matrices of $\mat{Q}_{k}$, we compute $\mat{Y}_{k}$ without involving $\mat{A}_{k}$ in the process.

\subsection{Updating $\mat{H}$, $\mat{V}$, and $\mat{W}$}
\label{subsec:updating_matrices}
The next goal is to efficiently update the matrices $\mat{H}$, $\mat{V}$, and $\mat{W}$ using the small factorized matrices of $\mat{Y}_{k}$.
Naively, we would compute $\T{Y}$ and run a single iteration of CP-ALS with $\T{Y}$ to update $\mat{H}$, $\mat{V}$, and $\mat{W}$ (lines~\ref{alg:line:cp_start} to~\ref{alg:line:cp_end} in Algorithm~\ref{alg:als}).
However, multiplying a matricized tensor and a Khatri-Rao product (e.g., $\mat{Y}_{(1)}(\mat{W}\odot \mat{V})$) is burdensome, and thus we exploit the structure of the decomposed results $\mat{P}_{k}\mat{Z}_{k}^T\mat{F}^{(k)}\mat{E}\mat{D}^T$ of $\mat{Y}_{k}$ to reduce memory requirements and computational costs.
In other word, we do not compute $\mat{Y}_{k}$, and use only $\mat{P}_{k}\mat{Z}_{k}^T\mat{F}^{(k)}\mat{E}\mat{D}^T$ in updating $\mat{H}$, $\mat{V}$, and $\mat{W}$.
Note that the $k$-th frontal slice of $\T{Y}$, $\T{Y}(:,:,k)$, is $\mat{P}_{k}\mat{Z}_{k}^T\mat{F}^{(k)}\mat{E}\mat{D}^T$.

\begin{figure}
\centering
	\includegraphics[width=0.45\textwidth]{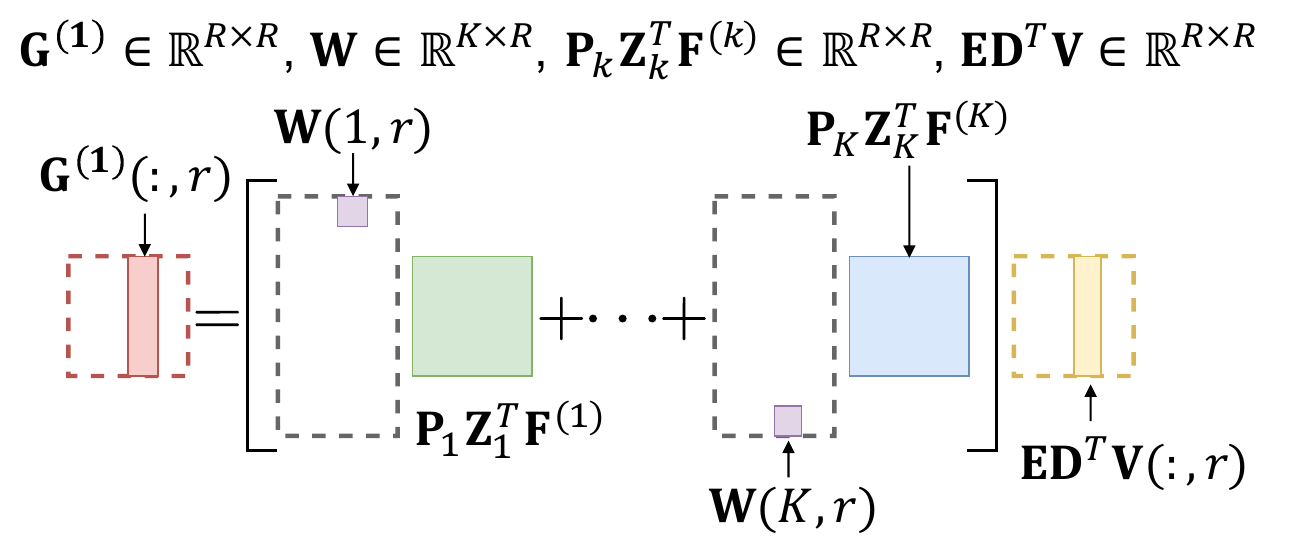}
	\vspace{-2mm}
\caption{
Computation for $\mat{G}^{(1)} = \mat{Y}_{(1)}(\mat{W}\odot \mat{V})$.
The $r$th column $\mat{G}^{(1)}(:,r)$ of $\mat{G}^{(1)}$ is computed by $\left(\sum_{k=1}^{K}{\mat{W}(k,r)\left(\mat{P}_{k}\mat{Z}_{k}^T\mat{F}^{(k)}\right)}\right)\mat{E}\mat{D}^T\mat{V}(:,r)$.
	}
\label{fig:example_g1}	
\end{figure}

\textbf{Updating $\mat{H}$.}
In $\mat{Y}_{(1)}(\mat{W}\odot \mat{V})(\mat{W}^{T}\mat{W} * \mat{V}^{T}\mat{V})^{\dagger}$, we focus on efficiently computing $\mat{Y}_{(1)}(\mat{W} \odot \mat{V})$
based on Lemma~\ref{lemma:mttkrp_mode1}.
A naive computation for $\mat{Y}_{(1)}(\mat{W} \odot \mat{V})$ requires a high computational cost $\T{O}(JKR^2)$ due to the explicit reconstruction of $\mat{Y}_{(1)}$.
Therefore, we compute that term without the reconstruction by carefully determining the order of computations and exploiting the factorized matrices of $\mat{Y}_{(1)}$, $\mat{D}$, $\mat{E}$, $\mat{P}_{k}$, $\mat{Z}_{k}$, and $\mat{F}^{(k)}$ for $k=1,...,K$.
With Lemma~\ref{lemma:mttkrp_mode1}, we reduce the computational cost of $\mat{Y}_{(1)}(\mat{W} \odot \mat{V})$ to $\T{O}(JR^2 + KR^3)$.

\begin{lemma}
\label{lemma:mttkrp_mode1}
Let us denote $\mat{Y}_{(1)}(\mat{W}\odot \mat{V})$ with $\mat{G}^{(1)} \in \mathbb{R}^{R \times R}$.
$\mat{G}^{(1)}(:,r)$ is equal to $\left(\left(\sum_{k=1}^{K}{\mat{W}(k,r)\left(\mat{P}_{k}\mat{Z}_{k}^T\mat{F}^{(k)}\right)}\right)\mat{E}\mat{D}^T\mat{V}(:,r)\right)$.
\QEDB
\end{lemma}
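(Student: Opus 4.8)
The plan is to expand the matricized-tensor-times-Khatri-Rao product $\mat{Y}_{(1)}(\mat{W}\odot\mat{V})$ column by column directly from the definitions of mode-1 matricization and the Khatri-Rao product, then substitute the factorized slices $\T{Y}(:,:,k)=\mat{P}_{k}\mat{Z}_{k}^T\mat{F}^{(k)}\mat{E}\mat{D}^T$ and isolate the single factor that does not depend on the summation index $k$. First I would fix a column index $r$ and write $\mat{G}^{(1)}(:,r)$ as a sum over all frontal slices. By the definition of the Khatri-Rao product, the entry of the $r$-th column of $\mat{W}\odot\mat{V}$ associated with the pair $(j,k)$ is the scalar $\mat{W}(k,r)\mat{V}(j,r)$, and by the matricization convention~\cite{KoldaB09} the matching column of $\mat{Y}_{(1)}$ is $\T{Y}(:,j,k)$. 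Collecting terms gives
\begin{align*}
\mat{G}^{(1)}(:,r)=\sum_{k=1}^{K}\sum_{j=1}^{J}\mat{W}(k,r)\,\mat{V}(j,r)\,\T{Y}(:,j,k).
\end{align*}

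Next I would collapse the inner sum over $j$. Since $\T{Y}(:,j,k)$ is the $j$-th column of the slice $\mat{Y}_{k}=\mat{P}_{k}\mat{Z}_{k}^T\mat{F}^{(k)}\mat{E}\mat{D}^T$, the inner sum $\sum_{j}\mat{V}(j,r)\,\mat{Y}_{k}(:,j)$ is precisely the matrix-vector product $\mat{Y}_{k}\mat{V}(:,r)$. Substituting the factorized form of $\mat{Y}_{k}$ then yields
\begin{align*}
\mat{G}^{(1)}(:,r)=\sum_{k=1}^{K}\mat{W}(k,r)\,\mat{P}_{k}\mat{Z}_{k}^T\mat{F}^{(k)}\mat{E}\mat{D}^T\mat{V}(:,r).
\end{align*}

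The final step is the observation that makes the lemma useful: the trailing factor $\mat{E}\mat{D}^T\mat{V}(:,r)$ is independent of $k$, so it can be pulled outside the sum while the scalar weights $\mat{W}(k,r)$ remain inside, producing the claimed expression $\left(\sum_{k=1}^{K}\mat{W}(k,r)\,\mat{P}_{k}\mat{Z}_{k}^T\mat{F}^{(k)}\right)\mat{E}\mat{D}^T\mat{V}(:,r)$. There is no genuine analytical obstacle here, since the argument is a reorganization of a finite double sum; the only point requiring care is keeping the index conventions for the matricization and the Khatri-Rao product mutually consistent, so that the scalar attached to slice $(j,k)$ is indeed $\mat{W}(k,r)\mat{V}(j,r)$ and not $\mat{V}(k,r)\mat{W}(j,r)$. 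I would verify this against the convention cited earlier, because swapping the roles of $j$ and $k$ in the linear index would silently break the collapse of the inner sum even though the final identity would superficially look the same. Once the identity is established, the efficiency gain advertised before the lemma follows immediately: the $R\times R$ matrices $\mat{P}_{k}\mat{Z}_{k}^T\mat{F}^{(k)}$ can be weighted and accumulated cheaply, avoiding any explicit reconstruction of $\mat{Y}_{(1)}$.
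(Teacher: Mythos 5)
Your proof is correct, but it follows a genuinely different route from the paper's. You work at the index level: you expand $\mat{Y}_{(1)}(\mat{W}\odot\mat{V})(:,r)$ as a double sum $\sum_{k}\sum_{j}\mat{W}(k,r)\mat{V}(j,r)\T{Y}(:,j,k)$ directly from the definitions of mode-1 matricization and the Khatri-Rao product, collapse the inner sum into the matrix--vector product $\mat{Y}_{k}\mat{V}(:,r)$, substitute $\mat{Y}_{k}=\mat{P}_{k}\mat{Z}_{k}^T\mat{F}^{(k)}\mat{E}\mat{D}^T$, and pull the $k$-independent factor $\mat{E}\mat{D}^T\mat{V}(:,r)$ out of the sum. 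The paper instead works at the block-matrix level: it first establishes the global factorization $\mat{Y}_{(1)}=\left(\concat_{k=1}^{K}\left(\mat{P}_{k}\mat{Z}_{k}^T\mat{F}^{(k)}\right)\right)\left(\mat{I}_{K\times K}\otimes\mat{E}\mat{D}^T\right)$, writes the $r$-th column of $\mat{W}\odot\mat{V}$ as $\mat{W}(:,r)\otimes\mat{V}(:,r)$, and invokes the Kronecker mixed-product property $(\mat{A}\otimes\mat{B})(\mat{C}\otimes\mat{D})=\mat{A}\mat{C}\otimes\mat{B}\mat{D}$ to absorb $\mat{E}\mat{D}^T$ into $\mat{V}(:,r)$ before reading off the blocks. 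Your version is more elementary and self-contained --- it needs no Kronecker identities, only the definition of matrix multiplication --- and it surfaces explicitly the one place where care is genuinely needed, namely that the $(j,k)$ ordering of the matricization columns must match the ordering of the Khatri-Rao rows (which it does under the convention of Kolda and Bader cited in the paper). The paper's version buys a reusable closed-form factorization of $\mat{Y}_{(1)}$ itself, which it then mirrors for $\mat{Y}_{(2)}$ and $\mat{Y}_{(3)}$ in the subsequent lemmas, at the cost of leaning on the mixed-product property. Both arguments yield the same per-column formula and the same $\T{O}(JR^2+KR^3)$ computational consequence.
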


\begin{proof}
$\mat{Y}_{(1)}$ is represented as follows:
	\begin{align*}
		 \mat{Y}_{(1)} &= \begin{bmatrix} \mat{P}_{1}\mat{Z}_{1}^T\mat{F}^{(1)}\mat{E}\mat{D}^T &; \cdots ;&  \mat{P}_{K}\mat{Z}_{K}^T\mat{F}^{(K)}\mat{E}\mat{D}^T \end{bmatrix} \\
		& = \left(\concat_{k=1}^{K}{\left(\mat{P}_{k}\mat{Z}_{k}^T\mat{F}^{(k)}\right)}\right)
		\begin{bmatrix}\mat{E}\mat{D}^T & \cdots & \mat{O}
		\\
		\vdots &  \ddots & \vdots
		\\
		\mat{O}  & \cdots &  \mat{E}\mat{D}^T
		\end{bmatrix} \\
		& = \left(\concat_{k=1}^{K}{\left(\mat{P}_{k}\mat{Z}_{k}^T\mat{F}^{(k)}\right)}\right) \left(\mat{I}_{K\times K} \otimes \mat{E}\mat{D}^T\right)
	\end{align*}
where $\mat{I}_{K \times K}$ is the identity matrix of size $K\times K$.
Then, $\mat{G}^{(1)} = \mat{Y}_{(1)}(\mat{W} \odot \mat{V})$ is expressed as follows:
\begin{align*}
	\mat{G}^{(1)} &= \left(\concat_{k=1}^{K}{\left(\mat{P}_{k}\mat{Z}_{k}^T\mat{F}^{(k)}\right)}\right) \\
	&\times \left(\mat{I}_{K\times K} \otimes \mat{E}\mat{D}^T\right)\left(\concat_{r=1}^{R}{\left(\mat{W}(:,r)\otimes \mat{V}(:,r)\right)}\right)\\
	& = \left(\concat_{k=1}^{K}{\left(\mat{P}_{k}\mat{Z}_{k}^T\mat{F}^{(k)}\right)}\right)
	\left(\concat_{r=1}^{R}{\left(\mat{W}(:,r)\otimes  \mat{E}\mat{D}^T\mat{V}(:,r)\right)}\right)
\end{align*}
The mixed-product property (i.e., $(\mat{A}\otimes \mat{B})(\mat{C}\otimes \mat{D}) = \mat{A}\mat{C}\otimes \mat{B}\mat{D})$) is used in the above equation.
Therefore, $\mat{G}^{(1)}(:,r)$ is equal to $\left(\concat_{k=1}^{K}{\left(\mat{P}_{k}\mat{Z}_{k}^T\mat{F}^{(k)}\right)}\right) \left(\mat{W}(:,r) \otimes \mat{E}\mat{D}^T\mat{V}(:,r)\right)$.
We represent it as $\sum_{k=1}^{K}{\mat{W}(k,r)\left(\mat{P}_{k}\mat{Z}_{k}^T\mat{F}^{(k)}\right)}$ $\mat{E}\mat{D}^T\mat{V}(:,r)$ using block matrix multiplication
since the $k$-th vertical block vector of $ \left(\mat{W}(:,r) \otimes \mat{E}\mat{D}^T\mat{V}(:,r)\right)\in \mathbb{R}^{KR}$ is $\mat{W}(k,r)\mat{E}\mat{D}^T\mat{V}(:,r) \in \mathbb{R}^{R}$.
\end{proof}
\vspace{-2mm}
As shown in Fig.~\ref{fig:example_g1}, we compute $\mat{Y}_{(1)}(\mat{W} \odot \mat{V})$ column by column.
In computing $\mat{G}^{(1)}(:,r)$, we compute $\mat{E}\mat{D}^T\mat{V}(:,r)$, sum up $\mat{W}(k,r)\left(\mat{P}_{k}\mat{Z}_{k}^T\mat{F}^{(k)}\right)$ for all $k$, and then perform matrix multiplication between the two preceding results (line~\ref{alg2:line:compute_g1} in Algorithm~\ref{alg:dpar2}).
After computing $\mat{G}^{(1)} \leftarrow \mat{Y}_{(1)}(\mat{W}\odot \mat{V})$, we update $\mat{H}$ by computing $\mat{G}^{(1)}(\mat{W}^{T}\mat{W} * \mat{V}^{T}\mat{V})^{\dagger}$ where ${\dagger}$ denotes the Moore-Penrose pseudoinverse (line~\ref{alg2:line:update_h} in Algorithm~\ref{alg:dpar2}).
Note that the pseudoinverse operation requires a lower computational cost compared to computing $\mat{G}^{(1)}$ since the size of $(\mat{W}^{T}\mat{W} * \mat{V}^{T}\mat{V}) \in \mathbb{R}^{R\times R}$ is small.

\begin{figure}
\centering
	\includegraphics[width=0.48\textwidth]{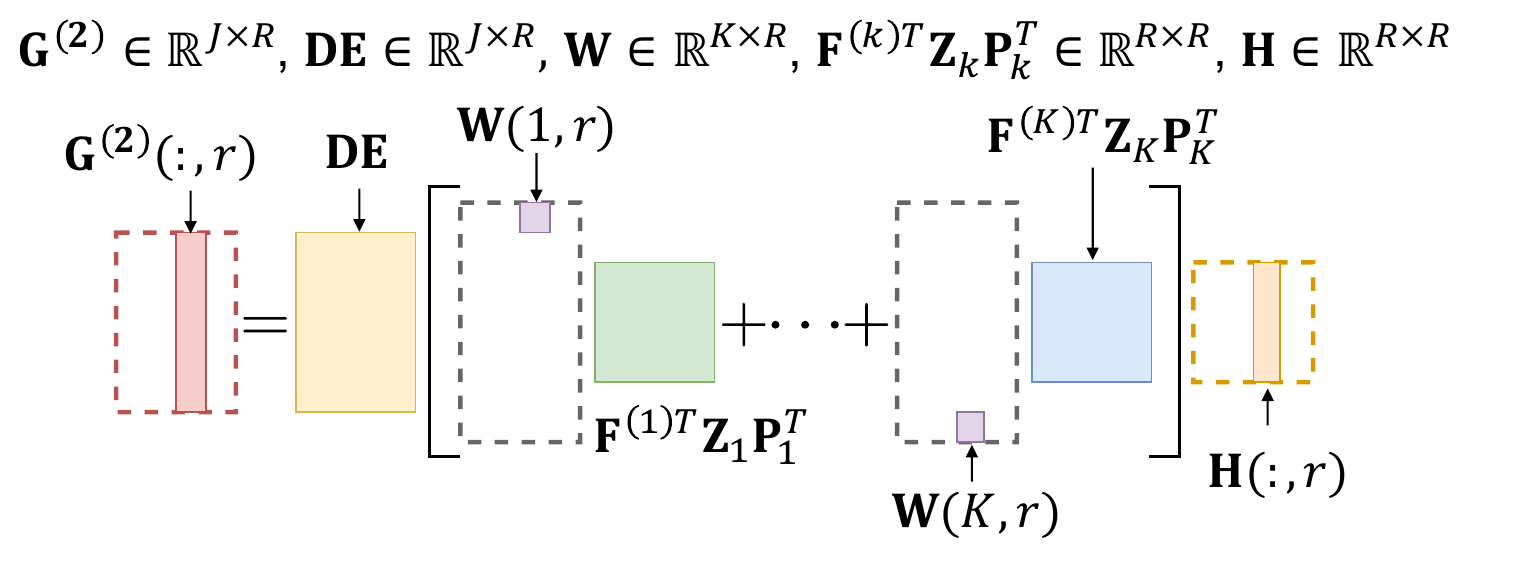}
		\vspace{-2mm}
\caption{
Computation for $\mat{G}^{(2)} = \mat{Y}_{(2)}(\mat{W}\odot \mat{H})$.
The $r$th column $\mat{G}^{(2)}(:,r)$ of $\mat{G}^{(2)}$ is computed by $\mat{D}\mat{E}\sum_{k=1}^{K}{\left(\mat{W}(k,r)\mat{F}^{(k)T}\mat{Z}_{k}\mat{P}_{k}^{T}\mat{H}(:,r)\right)}$.}
\label{fig:example_g2}
\end{figure}

\textbf{Updating $\mat{V}$.}
In computing $\mat{Y}_{(2)}(\mat{W}\odot \mat{U})(\mat{W}^{T}\mat{W} * \mat{U}^{T}\mat{U})^{\dagger}$, we need to efficiently compute $\mat{Y}_{(2)}(\mat{W} \odot \mat{U})$
based on Lemma~\ref{lemma:mttkrp_mode2}.
As in updating $\mat{H}$, a naive computation for $\mat{Y}_{(2)}(\mat{W} \odot \mat{U})$ requires a high computational cost $\T{O}(JKR^2)$.
We efficiently compute $\mat{Y}_{(2)}(\mat{W} \odot \mat{U})$ with the cost $\T{O}(JR^2 + KR^3)$, by carefully determining the order of computations and exploiting the factorized matrices of $\mat{Y}_{(2)}$.
\begin{lemma}
\label{lemma:mttkrp_mode2}
Let us denote $\mat{Y}_{(2)}(\mat{W}\odot \mat{H})$ with $\mat{G}^{(2)} \in \mathbb{R}^{J \times R}$.
$\mat{G}^{(2)}(:,r)$ is equal to $\mat{D}\mat{E}\left(\sum_{k=1}^{K}{\left(\mat{W}(k,r)\mat{F}^{(k)T}\mat{Z}_{k}\mat{P}_{k}^{T}\mat{H}(:,r)\right)}\right)$.
\QEDB
\end{lemma}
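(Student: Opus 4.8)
The plan is to mirror the proof of Lemma~\ref{lemma:mttkrp_mode1}, but to work with the mode-2 matricization of $\T{Y}$ rather than the mode-1 one. Recall that the $k$-th frontal slice of $\T{Y}$ is $\T{Y}(:,:,k) = \mat{P}_{k}\mat{Z}_{k}^T\mat{F}^{(k)}\mat{E}\mat{D}^T \in \mathbb{R}^{R \times J}$. Whereas the mode-1 matricization concatenates these slices horizontally, the mode-2 matricization concatenates their \emph{transposes}: the $k$-th block of $R$ columns of $\mat{Y}_{(2)}$ is $\T{Y}(:,:,k)^T$. So the first step is to write $\mat{Y}_{(2)} = \concat_{k=1}^{K}\T{Y}(:,:,k)^T$ and to transpose each slice, using that $\mat{E}$ is diagonal (hence $\mat{E}^T = \mat{E}$), which gives $\T{Y}(:,:,k)^T = \mat{D}\mat{E}\mat{F}^{(k)T}\mat{Z}_{k}\mat{P}_{k}^T$.

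The key structural observation follows: since every block now shares the \emph{left} factor $\mat{D}\mat{E}$, it factors out directly, yielding
\begin{align*}
\mat{Y}_{(2)} = \mat{D}\mat{E}\left(\concat_{k=1}^{K}\mat{F}^{(k)T}\mat{Z}_{k}\mat{P}_{k}^T\right).
\end{align*}
Next I would multiply on the right by $\mat{W}\odot\mat{H}$ column by column. The $r$-th column of $\mat{W}\odot\mat{H}$ is $\mat{W}(:,r)\otimes\mat{H}(:,r) \in \mathbb{R}^{KR}$, whose $k$-th vertical block of size $R$ is $\mat{W}(k,r)\mat{H}(:,r)$. Applying block matrix multiplication to the horizontal concatenation then collapses the product into a single sum over $k$, giving $\mat{G}^{(2)}(:,r) = \mat{D}\mat{E}\sum_{k=1}^{K}\mat{W}(k,r)\mat{F}^{(k)T}\mat{Z}_{k}\mat{P}_{k}^T\mat{H}(:,r)$, which is exactly the claimed expression.

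The main thing to get right --- and the only real difference from Lemma~\ref{lemma:mttkrp_mode1} --- is the bookkeeping introduced by the transpose in the mode-2 unfolding: the shared factor that sat on the \emph{right} as $\mat{E}\mat{D}^T$ in the mode-1 argument (and had to be pulled out through an $\mat{I}_{K\times K}\otimes\mat{E}\mat{D}^T$ Kronecker structure) here appears on the \emph{left} as $\mat{D}\mat{E}$ and factors out immediately, so no mixed-product identity is needed. I would double-check the orientation of each factor after transposing and confirm the column index $i_1 + (i_3-1)R$ of the standard mode-2 matricization to be sure the $k$-th column block is indeed $\T{Y}(:,:,k)^T$; once that is pinned down, the remaining manipulation is routine block algebra.
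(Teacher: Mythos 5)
Your proposal is correct and follows essentially the same route as the paper's own proof: both write $\mat{Y}_{(2)}$ as the horizontal concatenation of the transposed slices $\mat{D}\mat{E}\mat{F}^{(k)T}\mat{Z}_{k}\mat{P}_{k}^{T}$, factor the common left factor $\mat{D}\mat{E}$ out of the concatenation, and then contract against the $r$-th column of $\mat{W}\odot\mat{H}$ blockwise to obtain the stated sum. Your added remarks on the mode-2 unfolding convention and the symmetry of the diagonal matrix $\mat{E}$ only make explicit what the paper leaves implicit.
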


\begin{proof}
		$\mat{Y}_{(2)}$ is represented as follows:
	\begin{align*}
		 \mat{Y}_{(2)} &= \begin{bmatrix} \mat{D}\mat{E}\mat{F}^{(1)T}\mat{Z}_{1}\mat{P}_{1}^{T} &; \cdots ;&  \mat{D}\mat{E}\mat{F}^{(K)T}\mat{Z}_{K}\mat{P}_{K}^{T} \end{bmatrix} \\
		&= \mat{D}\mat{E}\left(\concat_{k=1}^{K}{\mat{F}^{(k)T}\mat{Z}_{k}\mat{P}_{k}^{T}}\right)
	\end{align*}
Then, $\mat{G}^{(2)} = \mat{Y}_{(2)}(\mat{W} \odot \mat{H})$ is expressed as follows:
\begin{align*}
\mat{G}^{(2)} &= \mat{D}\mat{E}\left(\concat_{k=1}^{K}{\mat{F}^{(k)T}\mat{Z}_{k}\mat{P}_{k}^{T}}\right)\\
&\times \begin{bmatrix}\mat{W}(1,1)\mat{H}(:,1) ;& \cdots &; \mat{W}(1,R)\mat{H}(:,R)
		\\
		\vdots &  \vdots & \vdots
		\\
		\mat{W}(K,1)\mat{H}(:,1) ; & \cdots & ; \mat{W}(K,R)\mat{H}(:,R)
		\end{bmatrix}
\end{align*}
$\mat{G}^{(2)}(:,r)$ is equal to $\mat{D}\mat{E}\sum_{k=1}^{K}{\left(\mat{W}(k,r)\mat{F}^{(k)T}\mat{Z}_{k}\mat{P}_{k}^{T}\mat{H}(:,r)\right)}$ according to the above equation.	
\end{proof}
\vspace{-2mm}
\noindent As shown in Fig.~\ref{fig:example_g2}, we compute $\mat{G}^{(2)} \leftarrow \mat{Y}_{(2)}(\mat{W}\odot \mat{H})$ column by column.
After computing $\mat{G}^{(2)}$, we update $\mat{V}$ by computing $\mat{G}^{(2)}(\mat{W}^{T}\mat{W} * \mat{H}^{T}\mat{H})^{\dagger}$ (lines~\ref{alg2:line:compute_g2} and~\ref{alg2:line:update_v} in Algorithm~\ref{alg:dpar2}).


\textbf{Updating $\mat{W}$.}
In computing $\mat{Y}_{(3)}(\mat{V}\odot \mat{H})(\mat{V}^{T}\mat{V} * \mat{H}^{T}\mat{H})^{\dagger}$, we efficiently compute $\mat{Y}_{(3)}(\mat{V} \odot \mat{H})$
based on Lemma~\ref{lemma:mttkrp_mode3}.
As in updating $\mat{H}$ and $\mat{V}$, a naive computation for $\mat{Y}_{(3)}(\mat{V} \odot \mat{H})$ requires a high computational cost $\T{O}(JKR^2)$.
We compute $\mat{Y}_{(3)}(\mat{V} \odot \mat{H})$ with the cost $\T{O}(JR^2 + KR^3)$ based on Lemma~\ref{lemma:mttkrp_mode3}.
Exploiting the factorized matrices of $\mat{Y}_{(3)}$ and carefully determining the order of computations improves the efficiency.

\begin{figure}
\centering
	\includegraphics[width=0.45\textwidth]{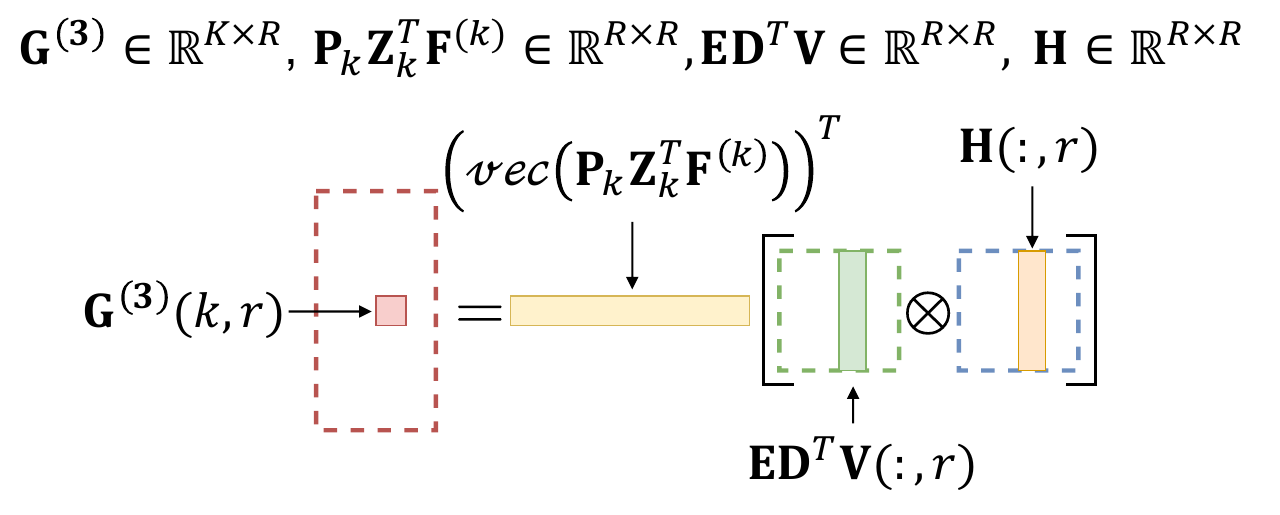}
\caption{
Computation for $\mat{G}^{(3)} = \mat{Y}_{(3)}(\mat{V} \odot \mat{H})$.
$\mat{G}^{(3)}(k,r)$ is computed by $\left(vec\left(\mat{P}_{k}\mat{Z}_{k}^T\mat{F}^{(k)}\right)\right)^T\left(\mat{E}\mat{D}^T\mat{V}(:,r)\otimes \mat{H}(:,r)\right)$.
	}
\label{fig:example_g3}	
\end{figure}

\begin{lemma}
\label{lemma:mttkrp_mode3}
Let us denote $\mat{Y}_{(3)}(\mat{V}\odot \mat{H})$ with $\mat{G}^{(3)} \in \mathbb{R}^{K \times R}$.
$\mat{G}^{(3)}(k,r)$ is equal to $\left(vec\left(\mat{P}_{k}\mat{Z}_{k}^T\mat{F}^{(k)}\right)\right)^T\left(\mat{E}\mat{D}^T\mat{V}(:,r)\otimes \mat{H}(:,r)\right)$ where $vec(\cdot)$ denotes the vectorization of a matrix.
\QEDB
\end{lemma}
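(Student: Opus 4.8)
The plan is to mirror the strategy already used for Lemmas~\ref{lemma:mttkrp_mode1} and~\ref{lemma:mttkrp_mode2}: write the mode-3 matricization $\mat{Y}_{(3)}$ explicitly in terms of the factorized slices, and then read off the $(k,r)$ entry of the MTTKRP $\mat{Y}_{(3)}(\mat{V}\odot\mat{H})$ directly. First I would recall that the $k$-th frontal slice of $\T{Y}$ is $\T{Y}(:,:,k)=\mat{P}_{k}\mat{Z}_{k}^T\mat{F}^{(k)}\mat{E}\mat{D}^T \in \mathbb{R}^{R\times J}$, so the $k$-th row of $\mat{Y}_{(3)}$ is the (column-major) vectorization $(vec(\mat{P}_{k}\mat{Z}_{k}^T\mat{F}^{(k)}\mat{E}\mat{D}^T))^T$. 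Since the $r$-th column of the Khatri--Rao product $\mat{V}\odot\mat{H}$ is $\mat{V}(:,r)\otimes\mat{H}(:,r)$, the scalar $\mat{G}^{(3)}(k,r)$ is exactly the inner product $(vec(\mat{P}_{k}\mat{Z}_{k}^T\mat{F}^{(k)}\mat{E}\mat{D}^T))^T(\mat{V}(:,r)\otimes\mat{H}(:,r))$, which is the starting point.

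The key algebraic step is to peel off the fixed factor $\mat{E}\mat{D}^T$ from the slice-dependent factor $\mat{P}_{k}\mat{Z}_{k}^T\mat{F}^{(k)}$. I would apply the standard identity $vec(\mat{A}\mat{X}\mat{B})=(\mat{B}^T\otimes\mat{A})vec(\mat{X})$ with $\mat{A}=\mat{I}_{R\times R}$, $\mat{X}=\mat{P}_{k}\mat{Z}_{k}^T\mat{F}^{(k)}$, and $\mat{B}=\mat{E}\mat{D}^T$, giving $vec(\mat{P}_{k}\mat{Z}_{k}^T\mat{F}^{(k)}\mat{E}\mat{D}^T)=(\mat{D}\mat{E}\otimes\mat{I}_{R\times R})\,vec(\mat{P}_{k}\mat{Z}_{k}^T\mat{F}^{(k)})$, where I use that $\mat{E}$ is diagonal so $\mat{E}^T=\mat{E}$. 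Substituting this and transposing, the constant block moves onto the other factor: $\mat{G}^{(3)}(k,r)=(vec(\mat{P}_{k}\mat{Z}_{k}^T\mat{F}^{(k)}))^T(\mat{E}\mat{D}^T\otimes\mat{I}_{R\times R})(\mat{V}(:,r)\otimes\mat{H}(:,r))$, using $(\mat{D}\mat{E}\otimes\mat{I}_{R\times R})^T=\mat{E}\mat{D}^T\otimes\mat{I}_{R\times R}$.

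Finally I would invoke the mixed-product property $(\mat{A}\otimes\mat{B})(\mat{C}\otimes\mat{D})=\mat{A}\mat{C}\otimes\mat{B}\mat{D}$ already cited in the proof of Lemma~\ref{lemma:mttkrp_mode1} to collapse $(\mat{E}\mat{D}^T\otimes\mat{I}_{R\times R})(\mat{V}(:,r)\otimes\mat{H}(:,r))$ into $\mat{E}\mat{D}^T\mat{V}(:,r)\otimes\mat{H}(:,r)$. This yields $\mat{G}^{(3)}(k,r)=(vec(\mat{P}_{k}\mat{Z}_{k}^T\mat{F}^{(k)}))^T(\mat{E}\mat{D}^T\mat{V}(:,r)\otimes\mat{H}(:,r))$, which is precisely the claim.

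The part that needs the most care is matching conventions. I must confirm that the column-major $vec$ that forms each row of $\mat{Y}_{(3)}$ is consistent with the ordering of $\mat{V}\odot\mat{H}$, i.e.\ that the column index of $\mat{Y}_{(3)}$ enumerates the mode-1 index fastest so that it aligns with $\mat{V}(:,r)\otimes\mat{H}(:,r)$ rather than $\mat{H}(:,r)\otimes\mat{V}(:,r)$. Under the standard mode-$n$ matricization convention this alignment holds, but it is the one place where a silent convention mismatch would swap $\mat{V}\odot\mat{H}$ for $\mat{H}\odot\mat{V}$ and invalidate the inner-product pairing; everything after that is a routine application of the vec and Kronecker identities already used in the two preceding lemmas.
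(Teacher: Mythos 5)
Your proof is correct and follows essentially the same route as the paper's: both write the rows of $\mat{Y}_{(3)}$ as $\left(vec\left(\mat{P}_{k}\mat{Z}_{k}^T\mat{F}^{(k)}\mat{E}\mat{D}^T\right)\right)^T$, peel off the constant factor via $vec(\mat{A}\mat{B})=(\mat{B}^T\otimes\mat{I})vec(\mat{A})$, transpose, and collapse $(\mat{E}\mat{D}^T\otimes\mat{I}_{R\times R})\left(\mat{V}(:,r)\otimes\mat{H}(:,r)\right)$ by the mixed-product property. Your closing remark on the matricization/Khatri-Rao ordering convention is a point the paper leaves implicit, and your check that it aligns is right.
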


\begin{proof}
	$\mat{Y}_{(3)}$ is represented as follows:
	\begin{align*}
		 \mat{Y}_{(3)}
		  &= \begin{bmatrix} \left(vec\left(\mat{P}_{1}\mat{Z}_{1}^T\mat{F}^{(1)}\mat{E}\mat{D}^T\right)\right)^T \\ \vdots \\  \left(vec\left(\mat{P}_{K}\mat{Z}_{K}^T\mat{F}^{(K)}\mat{E}\mat{D}^T\right)\right)^T \end{bmatrix} \\
		 & = \left(\concat_{k=1}^{K}{\left(vec\left(\mat{P}_{k}\mat{Z}_{k}^T\mat{F}^{(k)}\mat{E}\mat{D}^T\right)\right)}\right)^T \\
		  & = \left(\concat_{k=1}^{K}{\left(\mat{D}\mat{E}\otimes \mat{I}\right)vec\left(\mat{P}_{k}\mat{Z}_{k}^T\mat{F}^{(k)}\right)}\right)^T \\
		  & = \left(\concat_{k=1}^{K}{\left(vec\left(\mat{P}_{k}\mat{Z}_{k}^T\mat{F}^{(k)}\right)\right)}\right)^T\left(\mat{E}\mat{D}^T \otimes \mat{I}_{R\times R}\right)
%
%
	\end{align*}
where $\mat{I}_{R \times R}$ is the identity matrix of size $R \times R$.
The property of the vectorization (i.e., $vec(\mat{A}\mat{B}) = (\mat{B}^T \otimes \mat{I})vec(\mat{A})$) is used.
Then, $\mat{G}^{(3)} = \mat{Y}_{(3)}(\mat{V} \odot \mat{H})$ is expressed as follows:
\begin{align*}
\mat{G}^{(3)}
 	&=
 	\left(\concat_{k=1}^{K}{\left(vec\left(\mat{P}_{k}\mat{Z}_{k}^T\mat{F}^{(k)}\right)\right)}\right)^T\\
 	&\times \left(\concat_{r=1}^{R}{\left(\mat{E}\mat{D}^T\mat{V}(:,r)\otimes \mat{H}(:,r)\right)}\right)
\end{align*}
$\mat{G}^{(3)}(k,r)$ is $\left(vec\left(\mat{P}_{k}\mat{Z}_{k}^T\mat{F}^{(k)}\right)\right)^T\left(\mat{E}\mat{D}^T\mat{V}(:,r)\otimes \mat{H}(:,r)\right)$ according to the above equation.
\end{proof}
\vspace{-2mm}

\noindent We compute $\mat{G}^{(3)} = \mat{Y}_{(3)}(\mat{V}\odot \mat{H})$ row by row.
Fig.~\ref{fig:example_g3} shows how we compute $\mat{G}^{(3)}(k,r)$.
In computing $\mat{G}^{(3)}$, we first compute $\mat{E}\mat{D}^T\mat{V}$, and then obtain $\mat{G}^{(3)}(k,:)$ for all $k$ (line~\ref{alg2:line:compute_g3} in Algorithm~\ref{alg:dpar2}).
After computing $\mat{G}^{(3)}$, we update $\mat{W}$ by computing $\mat{G}^{(3)}(\mat{V}^{T}\mat{V} * \mat{H}^{T}\mat{H})^{\dagger}$ where ${\dagger}$ denotes the Moore-Penrose pseudoinverse (line~\ref{alg2:line:update_w} in Algorithm~\ref{alg:dpar2}).
We obtain $\mat{S}_k$ whose diagonal elements correspond to the $k$th row vector of $\mat{W}$ (line~\ref{alg2:line:update_s1} in Algorithm~\ref{alg:dpar2}).

After convergence, we obtain the factor matrices, ($\mat{U}_{k} \leftarrow \mat{A}_{k}\mat{Z}_{k}\mat{P}_{k}^T\mat{H}$ $ = \mat{Q}_{k}\mat{H}$), $\mat{S}_{k}$, and $\mat{V}$ (line~\ref{alg2:line:obtain_U} in Algorithm~\ref{alg:dpar2}).

\textbf{Convergence Criterion.}
At the end of each iteration, we determine whether to stop or not (line~\ref{alg2:line:iter_end} in Algorithm~\ref{alg:dpar2}) based on the variation of $e = \left(\sum_{k=1}^{K} \|\mat{X}_{k}-\hat{\mat{X}}_{k}\|_{\mathbf{F}}^2\right)$ where $\hat{\mat{X}}_{k} = \mat{Q}_k\mat{H}\mat{S}_{k}\mat{V}^T$ is the $k$th reconstructed slice.
However, measuring reconstruction errors $\sum_{k=1}^{K} \|\mat{X}_{k}-\hat{\mat{X}}_{k}\|_{\mathbf{F}}^2$ is inefficient
since it requires high time and space costs proportional to input slices $\mat{X}_{k}$.
To efficiently verify the convergence, our idea is to exploit $\mat{A}_{k}\mat{F}^{(k)}\mat{E}\mat{D}^T$ 
instead of $\mat{X}_{k}$, since the objective of our update process is to minimize the difference between $\mat{A}_{k}\mat{F}^{(k)}\mat{E}\mat{D}^T$ and $\hat{\mat{X}}_{k} = \mat{Q}_k\mat{H}\mat{S}_{k}\mat{V}^T$.
With this idea, we improve the efficiency by computing $\sum_{k=1}^{K} \|\mat{P}_{k}\mat{Z}_{k}^T\mat{F}^{(k)}\mat{E}\mat{D}^T-\mat{H}\mat{S}_{k}\mat{V}^T \|_{\mathbf{F}}^2$, not the reconstruction errors.
Our computation requires the time $\T{O}(JKR^2)$ and space costs $\T{O}(JKR)$ which are much lower than the costs $\T{O}(\sum_{k=1}^{K}{I_kJR})$ and $\T{O}(\sum_{k=1}^{K}{I_kJ})$ of naively computing $\sum_{k=1}^{K} \|\mat{X}_{k}-\hat{\mat{X}}_{k}\|_{\mathbf{F}}^2$, respectively.
From $\|\mat{P}_{k}\mat{Z}_{k}^T\mat{F}^{(k)}\mat{E}\mat{D}^T-\mat{H}\mat{S}_{k}\mat{V}^T \|_{\mathbf{F}}^2$, we derive $\|\mat{A}_{k}\mat{F}^{(k)}\mat{E}\mat{D}^T-\hat{\mat{X}}_{k} \|_{\mathbf{F}}^2$.
Since the Frobenius norm is unitarily invariant, we modify the computation as follows:
\begin{align*}
	& \|\mat{P}_{k}\mat{Z}_{k}^T\mat{F}^{(k)}\mat{E}\mat{D}^T-\mat{H}\mat{S}_{k}\mat{V}^T \|_{\mathbf{F}}^2 \\
	& = \|\mat{Q}_k\mat{P}_{k}\mat{Z}_{k}^T\mat{F}^{(k)}\mat{E}\mat{D}^T-\mat{Q}_k\mat{H}\mat{S}_{k}\mat{V}^T \|_{\mathbf{F}}^2 \\ 	
& = \|\mat{A}_{k}\mat{Z}_{k}\mat{P}_{k}^{T}\mat{P}_{k}\mat{Z}_{k}^T\mat{F}^{(k)}\mat{E}\mat{D}^T-\mat{Q}_k\mat{H}\mat{S}_{k}\mat{V}^T \|_{\mathbf{F}}^2 \\	
& = \|\mat{A}_{k}\mat{F}^{(k)}\mat{E}\mat{D}^T-\hat{\mat{X}}_{k} \|_{\mathbf{F}}^2
\end{align*}
where $\mat{P}_{k}^{T}\mat{P}_{k}$ and $\mat{Z}_{k}\mat{Z}_{k}^T$ are equal to $\mat{I} \in \mathbb{R}^{R \times R}$ since $\mat{P}_{k}$ and $\mat{Z}_{k}$ are orthonormal matrices.
Note that the size of $\mat{P}_{k}\mat{Z}_{k}^T\mat{F}^{(k)}\mat{E}\mat{D}^T$ and $\mat{H}\mat{S}_{k}\mat{V}^T$ is $R\times J$ which is much smaller than the size $I_k \times J$ of input slices $\mat{X}_k$.
This modification completes the efficiency of our update rule.

\begin{algorithm} [t]
	\SetNoFillComment
	\caption{Careful distribution of work in \method}
	\label{alg:partitioning}
	\begin{algorithmic} [1]
		\small
		\algsetup{linenosize=\small}

		\renewcommand{\algorithmicrequire}{\textbf{Input:}}
		\renewcommand{\algorithmicensure}{\textbf{Output:}}
		    \REQUIRE the number $T$ of threads, $\mat{X}_{k} \in \mathbb{R}^{I_{k} \times J}$ for $k = 1,...,K$
		    \ENSURE sets $\T{T}_i$ for $i=1,...,T$.
		\STATE initialize $\T{T}_i \leftarrow \emptyset $ for $i=1,...,T$.
		\STATE construct a list $S$ of size $T$ whose elements are zero
		\STATE construct a list $L_{init}$ containing the number of rows of $\mat{X}_{k}$ for $k=1,...,K$ \label{alg4:line:list_construction} \\
		\STATE sort $L_{init}$ in descending order, and obtain lists $L_{val}$ and $L_{ind}$ that contain sorted values and those corresponding indices \label{alg4:line:sort} \\
		\FOR {$k=1,...,K$}
			\STATE $t_{min} \leftarrow \argmin S$ \label{alg4:line:find_min_set}  \\
			\STATE $l \leftarrow L_{ind}[k]$ \label{alg4:line:find_ind} \\
			\STATE $\T{T}_{t_{min}} \leftarrow \T{T}_{t_{min}} \cup \{\mat{X}_{l} \}$ \label{alg4:line:add_slice} \\
			\STATE $S[t_{min}] \leftarrow S[t_{min}] + L_{val}[k]$ \label{alg4:line:update_S}
		\ENDFOR
	\end{algorithmic}
\end{algorithm}

\vspace{-1mm}
\subsection{Careful distribution of work}
\label{subsec:carefulwork}
The last challenge for an efficient and scalable PARAFAC2 decomposition method is how to parallelize the computations described in Sections~\ref{subsec:compression} to~\ref{subsec:updating_matrices}.
Although a previous work~\cite{PerrosPWVSTS17} introduces the parallelization with respect to the $K$ slices,
there is still a room for maximizing parallelism.
Our main idea is to carefully allocate input slices $\mat{X}_{k}$ to threads by considering the irregularity of a given tensor.

The most expensive operation is to compute randomized SVD of input slices $\mat{X}_{k}$ for all $k$; thus we first focus on how well we parallelize this computation (i.e., lines~\ref{alg2:line:start_rand} to~\ref{alg2:line:end_rand} in Algorithm~\ref{alg:dpar2}).
A naive approach is to randomly allocate input slices to threads, and let each thread compute randomized SVD of the allocated slices.
However, the completion time of each thread can vary since the computational cost of computing randomized SVD is proportional to the number of rows of slices;
the number of rows of input slices is different from each other as shown in Fig.~\ref{fig:time_length}.
Therefore, we need to distribute $\mat{X}_{k}$ fairly across each thread considering their numbers of rows.

For $i=1,..,T$, consider that an $i$th thread performs randomized SVD for slices in a set $\T{T}_{i}$ where $T$ is the number of threads.
To reduce the completion time, the sums of rows of slices in the sets should be nearly equal to each other.
To achieve it, we exploit a greedy number partitioning technique that repeatedly adds a slice into a set with the smallest sum of rows.
Algorithm~\ref{alg:partitioning} describes how to construct the sets $\T{T}_{i}$ for compressing input slices in parallel.
Let $L_{init}$ be a list containing the number of rows of $\mat{X}_k$ for $k=1,...,K$ (line~\ref{alg4:line:list_construction} in Algorithm~\ref{alg:partitioning}).
We first obtain lists $L_{val}$ and $L_{ind}$, sorted values and those corresponding indices, by sorting $L_{init}$ in descending order (line~\ref{alg4:line:sort} in Algorithm~\ref{alg:partitioning}).
We repeatedly add a slice $\mat{X}_{k}$ to a set $\T{T}_{i}$ that has the smallest sum.
For each $k$, we find the index $t_{min}$ of the minimum in $S$ whose $i$th element corresponds to the sum of row sizes of slices in the $i$th set $\T{T}_{i}$ (line~\ref{alg4:line:find_min_set} in Algorithm~\ref{alg:partitioning}).
Then, we add a slice $\mat{X}_{l}$ to the set $\T{T}_{t_{min}}$ where $l$ is equal to $L_{ind}[k]$, and update the list $S$ by $S[t_{min}] \leftarrow S[t_{min}] + L_{val}[k]$ (lines~\ref{alg4:line:find_ind} to~\ref{alg4:line:update_S} in Algorithm~\ref{alg:partitioning}).
Note that $S[k]$, $L_{ind}[k]$, and $L_{val}[k]$ denote the $k$th element of $S$, $L_{ind}$, and $L_{val}$, respectively.
After obtaining the sets $\T{T}_{i}$ for $i=1,..,T$, $i$th thread performs randomized SVD for slices in the set $\T{T}_{i}$.


After decomposing $\mat{X}_{k}$ for all $k$, we do not need to consider the irregularity for parallelism since there is no computation with $\mat{A}_{k}$ which involves the irregularity.
Therefore, we uniformly allocate computations across threads for all $k$ slices.
In each iteration (lines~\ref{alg2:line:start_iter} to~\ref{alg2:line:update_s} in Algorithm~\ref{alg:dpar2}), we easily parallelize computations.
First, we parallelize the iteration (lines~\ref{alg2:line:start_iter} to~\ref{alg2:line:end_fiter}) for all $k$ slices.
To update $\mat{H}$, $\mat{V}$, and $\mat{W}$, we need to compute $\mat{G}^{(1)}$, $\mat{G}^{(2)}$, and $\mat{G}^{(3)}$ in parallel.
In Lemmas~\ref{lemma:mttkrp_mode1} and~\ref{lemma:mttkrp_mode2}, \method parallelly computes $\mat{W}(k,r)\left(\mat{P}_{k}\mat{Z}_{k}^T\mat{F}^{(k)}\right)$ and $\mat{W}(k,r)\mat{F}^{(k)}\mat{Z}_{k}\mat{P}_{k}^{T}\mat{H}(:,r)$ for $k$, respectively.
In Lemma~\ref{lemma:mttkrp_mode3}, \method parallelly computes $\left(vec\left(\mat{P}_{k}\mat{Z}_{k}^T\mat{F}^{(k)}\right)\right)^T\left(\mat{E}\mat{D}^T\mat{V}(:,r)\otimes \mat{H}(:,r)\right)$ for $k$.
\begin{figure}
	\centering
	\vspace{-2mm}
	 \subfloat[US stock data]{\includegraphics[width=0.2\textwidth]{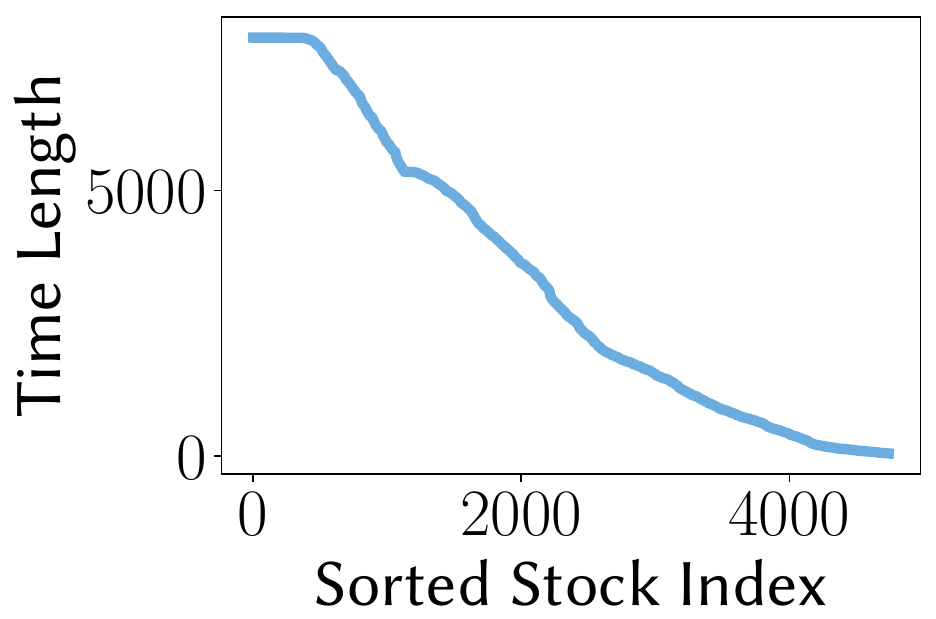}\label{fig:length_US}}
	 \subfloat[KR stock data]{\includegraphics[width=0.2\textwidth]{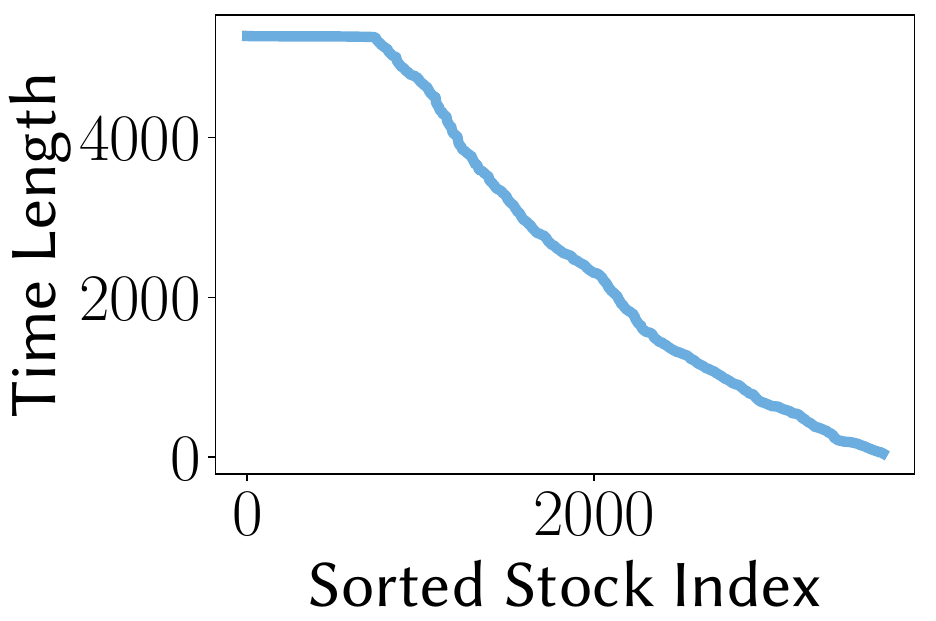}\label{fig:length_KR}}
	 	 \\
	 	\vspace{-1mm}
	\caption{
	The length of temporal dimension of input slices $\mat{X}_{k}$ on US Stock and Korea Stock data.
	We sort the lengths in descending order.
	}
	\label{fig:time_length}
\end{figure}

\subsection{Complexities}
\label{subsec:complexities}

We analyze the time complexity of \method.

\begin{lemma}
	\label{lemma:comp}
	Compressing input slices takes $\T{O}\left(\left(\sum_{k=1}^{K}{I_{k}JR}\right)+JKR^2\right)$ time.
\end{lemma}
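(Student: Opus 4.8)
The plan is to split the running time of the compression procedure into its two stages and sum the contributions, invoking the cost of randomized SVD already stated in the paper: $\T{O}(IJR)$ for an $I \times J$ matrix decomposed at target rank $R$ (Section~\ref{subsec:SVD}).

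First I would bound the first stage (lines~\ref{alg2:line:start_rand} to~\ref{alg2:line:end_rand} of Algorithm~\ref{alg:dpar2}). For each slice $\mat{X}_{k} \in \mathbb{R}^{I_k \times J}$, the randomized SVD at rank $R$ costs $\T{O}(I_k J R)$ by the above formula with $I = I_k$. Since the $K$ slice decompositions are independent, summing over $k$ gives $\T{O}\left(\sum_{k=1}^{K} I_k J R\right)$ for the whole first stage.

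Next I would account for constructing $\mat{M} = \concat_{k=1}^{K}(\mat{C}_k \mat{B}_k)$ (line~\ref{alg2:line:concat_m}) and the second-stage SVD (line~\ref{alg2:line:sstageSVD}). Each product $\mat{C}_k \mat{B}_k$ multiplies $\mat{C}_k \in \mathbb{R}^{J \times R}$ by the diagonal matrix $\mat{B}_k \in \mathbb{R}^{R \times R}$, which is a column scaling costing $\T{O}(JR)$ per slice, hence $\T{O}(JKR)$ in total; this term is dominated by the second-stage cost below. The resulting matrix is $\mat{M} \in \mathbb{R}^{J \times KR}$, and applying the randomized-SVD cost formula with row dimension $J$ and column dimension $KR$ yields $\T{O}(J \cdot KR \cdot R) = \T{O}(JKR^2)$.

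Summing the two stages gives $\T{O}\left(\left(\sum_{k=1}^{K} I_k J R\right) + JKR^2\right)$, which matches the claim; the $\T{O}(JKR)$ concatenation cost is absorbed into the $\T{O}(JKR^2)$ term. The only point requiring care — and the main (mild) obstacle — is correctly matching dimensions in the randomized-SVD formula for the second stage, treating $\mat{M}$ as a $J \times KR$ matrix rather than its transpose, and confirming that the construction of $\mat{M}$ is genuinely dominated; everything else is a direct substitution into a cost bound already established in the preliminaries.
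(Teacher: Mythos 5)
Your proof is correct and follows essentially the same route as the paper's: apply the stated randomized-SVD cost $\T{O}(IJR)$ slice-by-slice to get $\T{O}\left(\sum_{k=1}^{K}{I_k J R}\right)$ for the first stage, then apply it to $\mat{M} \in \mathbb{R}^{J \times KR}$ to get $\T{O}(JKR^2)$ for the second stage, and sum. Your additional observation that forming $\mat{M}$ (the $\T{O}(JKR)$ column scalings and concatenation) is dominated by the second-stage term is a small bookkeeping step the paper omits, but it does not change the argument.
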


\begin{proof}
The SVD in the first stage takes $\T{O}\left(\sum_{k=1}^{K}{I_{k}JR}\right)$ times since computing randomized SVD of $\mat{X}_{k}$ takes $\T{O}(I_{k}JR)$ time.
Then, the SVD in the second stage takes $\T{O}\left(JKR^2\right)$ due to randomized SVD of $\mat{M}_{(2)} \in \mathbb{R}^{J \times KR}$.
Therefore, the time complexity of the SVD in the two stages is $\T{O}\left(\left(\sum_{k=1}^{K}{I_{k}JR}\right)+JKR^2\right)$.
\end{proof}

\begin{lemma}
	\label{lemma:update}
	At each iteration, computing $\mat{Y}_{k}$ and updating $\mat{H}$, $\mat{V}$, and $\mat{W}$ takes $\T{O}(JR^2+KR^3)$ time.
\end{lemma}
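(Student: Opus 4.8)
The plan is to bound the cost of every distinct operation performed in one pass of lines~\ref{alg2:line:start_iter}--\ref{alg2:line:update_s} of Algorithm~\ref{alg:dpar2} and to show that each piece fits inside $\T{O}(JR^2+KR^3)$. I would partition the work into three blocks: (i) the per-slice SVD of $\mat{F}^{(k)}\mat{E}\mat{D}^T\mat{V}\mat{S}_{k}\mat{H}^T$ that yields $\mat{Z}_{k}$ and $\mat{P}_{k}$, from which the factorized form of $\mat{Y}_{k}$ follows directly; (ii) the three matricized-tensor-times-Khatri-Rao products $\mat{G}^{(1)},\mat{G}^{(2)},\mat{G}^{(3)}$ evaluated via Lemmas~\ref{lemma:mttkrp_mode1}--\ref{lemma:mttkrp_mode3}; and (iii) the three pseudoinverse-based updates of $\mat{H}$, $\mat{V}$, and $\mat{W}$.

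For block (i), the key observation is that $\mat{D}^T\mat{V}\in\mathbb{R}^{R\times R}$ does not depend on $k$, so I would compute it once at cost $\T{O}(JR^2)$ and reuse it across all slices. Given this factor, forming $\mat{F}^{(k)}\mat{E}\mat{D}^T\mat{V}\mat{S}_{k}\mat{H}^T$ for each $k$ is a constant number of $R\times R$ multiplications (using that $\mat{E}$ and $\mat{S}_{k}$ are diagonal), and the truncated SVD of an $R\times R$ matrix costs $\T{O}(R^3)$; summing over $k$ gives $\T{O}(KR^3)$. Assembling the factorized $\mat{Y}_{k}=\mat{P}_{k}\mat{Z}_{k}^T\mat{F}^{(k)}\mat{E}\mat{D}^T$ adds only the $R\times R$ core $\mat{P}_{k}\mat{Z}_{k}^T\mat{F}^{(k)}$, again $\T{O}(R^3)$ per slice. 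Thus block (i) is $\T{O}(JR^2+KR^3)$.

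For block (ii), I would rely on the column-wise (resp.\ row-wise) formulas established in Lemmas~\ref{lemma:mttkrp_mode1}--\ref{lemma:mttkrp_mode3}. In each case the per-slice core $\mat{P}_{k}\mat{Z}_{k}^T\mat{F}^{(k)}$ (or its transposed variant) is an $R\times R$ product costing $\T{O}(R^3)$ per $k$, hence $\T{O}(KR^3)$ over all slices, while the only factors touching the large dimension $J$ are the shared $\mat{E}\mat{D}^T\mat{V}$ and $\mat{D}\mat{E}$, each computed once at cost $\T{O}(JR^2)$ and then combined with $R\times R$ blocks. Because these lemmas express each result without materializing $\mat{Y}_{k}$ or any mode-$n$ matricization $\mat{Y}_{(n)}$, no $J\times KR$ intermediate is ever formed, so each of $\mat{G}^{(1)},\mat{G}^{(2)},\mat{G}^{(3)}$ costs $\T{O}(JR^2+KR^3)$ rather than the naive $\T{O}(JKR^2)$ noted in Section~\ref{subsec:updating_matrices}.

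Finally, for block (iii), each normal-equation factor $\mat{W}^T\mat{W}$, $\mat{V}^T\mat{V}$, $\mat{H}^T\mat{H}$ is an $R\times R$ Gram matrix costing at most $\T{O}(JR^2+KR^2)$, its Hadamard product and $R\times R$ pseudoinverse cost $\T{O}(R^3)$, and multiplying the already-computed $\mat{G}^{(n)}$ by this $R\times R$ inverse costs $\T{O}(JR^2)$, $\T{O}(JR^2)$, and $\T{O}(KR^2)$ for $\mat{H}$, $\mat{V}$, $\mat{W}$ respectively. Summing the three blocks yields the claimed $\T{O}(JR^2+KR^3)$. The main obstacle, and the part demanding the most care, is the bookkeeping in block (ii): I must pin down an evaluation order in which $J$ enters only through one-pass operations (the shared $\mat{E}\mat{D}^T\mat{V}$ and $\mat{D}\mat{E}$) while all $k$-indexed work remains at the $R\times R$ scale, which is precisely what Lemmas~\ref{lemma:mttkrp_mode1}--\ref{lemma:mttkrp_mode3} guarantee.
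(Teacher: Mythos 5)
Your proposal is correct and follows essentially the same route as the paper's own proof: bound the per-slice SVD work by exploiting that $\mat{F}^{(k)}\mat{E}\mat{D}^T\mat{V}\mat{S}_{k}\mat{H}^T$ is an $R\times R$ product once the shared $J$-dependent factor is precomputed, and charge the updates of $\mat{H}$, $\mat{V}$, $\mat{W}$ to Lemmas~\ref{lemma:mttkrp_mode1}--\ref{lemma:mttkrp_mode3} plus cheap $R\times R$ Gram/pseudoinverse operations. The paper states these two bounds without elaboration, so your write-up is simply a more detailed account of the identical argument.
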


\begin{proof}
	For $\mat{Y}_{k}$, computing $\mat{F}^{(k)}\mat{E}\mat{D}^T\mat{V}\mat{S}_{k}\mat{H}^{T}$ and performing SVD of it for all $k$ take $\T{O}(JR^2 + KR^3)$.
	Updating each of $\mat{H}$, $\mat{V}$, and $\mat{W}$ takes $\T{O}(JR^2+KR^3+R^3)$ time.
	Therefore, the complexity for $\mat{Y}_{k}$, $\mat{H}$, $\mat{V}$, and $\mat{W}$ is $\T{O}\left(JR^2+KR^3\right)$.
\end{proof}

\begin{theorem}
The time complexity of
	\method is $\T{O}\left(\left(\sum_{k=1}^{K}{I_{k}JR}\right)+JKR^2+ MKR^3\right)$ where $M$ is the number of iterations.
\end{theorem}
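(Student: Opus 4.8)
The plan is to decompose the total running time of \method (Algorithm~\ref{alg:dpar2}) into three disjoint phases and bound each separately: (i) the one-time two-stage compression (lines~\ref{alg2:line:start_rand} to~\ref{alg2:line:sstageSVD}), (ii) the $M$ iterations that update the factor matrices (lines~\ref{alg2:line:iter_start} to~\ref{alg2:line:iter_end}), and (iii) the final reconstruction of $\mat{U}_{k}$ for all $k$ (line~\ref{alg2:line:obtain_U}). Since the compression runs exactly once before the loop while the update block runs $M$ times, the total cost is the sum of phase (i), $M$ copies of a single iteration, and phase (iii). The analytical content is already packaged into the preceding lemmas, so the argument is essentially an accounting over these phases.

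For phase (i) I would directly invoke Lemma~\ref{lemma:comp}, giving $\T{O}\left(\left(\sum_{k=1}^{K}{I_{k}JR}\right)+JKR^2\right)$. For phase (ii), Lemma~\ref{lemma:update} bounds one iteration by $\T{O}(JR^2+KR^3)$, so multiplying by the iteration count yields $\T{O}(MJR^2+MKR^3)$. For phase (iii) I would count the cost of forming $\mat{U}_{k}=\mat{A}_{k}\mat{Z}_{k}\mat{P}_{k}^{T}\mat{H}$: the $R\times R$ product $\mat{Z}_{k}\mat{P}_{k}^{T}\mat{H}$ costs $\T{O}(R^3)$, and the subsequent left-multiplication by $\mat{A}_{k}\in\mathbb{R}^{I_k\times R}$ costs $\T{O}(I_{k}R^2)$, so summing over $k$ gives $\T{O}\left(\sum_{k=1}^{K}{I_{k}R^2}+KR^3\right)$.

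The remaining step is to merge the three bounds into the stated form, and this is the only place where care is required. Adding them produces
\begin{align*}
\T{O}\!\left(\left(\sum_{k=1}^{K}{I_{k}JR}\right)+JKR^2+MJR^2+MKR^3+\sum_{k=1}^{K}{I_{k}R^2}+KR^3\right).
\end{align*}
Under the standard regime $R\le J$, each $I_{k}R^2$ is dominated by $I_{k}JR$, so $\sum_{k}I_{k}R^2$ is absorbed into $\sum_{k}I_{k}JR$; likewise $KR^3$ is dominated by $MKR^3$ since $M\ge 1$; and the cross term $MJR^2$ is absorbed into $JKR^2$ whenever the number of iterations does not exceed the number of slices, i.e.\ $M\le K$, which holds for the datasets of interest. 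After these absorptions the bound collapses to $\T{O}\left(\left(\sum_{k=1}^{K}{I_{k}JR}\right)+JKR^2+MKR^3\right)$, matching the statement. The main obstacle is therefore not analytical but bookkeeping: one must make explicit the mild regularity assumptions ($R\le J$ and $M\le K$) under which the $\sum_{k}I_{k}R^2$ and $MJR^2$ terms are subsumed, rather than leaving them as additional additive terms in the final expression.
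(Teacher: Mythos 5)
Your proposal is correct and follows essentially the same route as the paper: the paper's proof likewise sums the compression cost of Lemma~\ref{lemma:comp} with $M$ copies of the per-iteration cost of Lemma~\ref{lemma:update}, and then discards the $MJR^2$ cross term as dominated by $\left(\sum_{k=1}^{K}I_kJR\right)$ and $JKR^2$. Your only additions are refinements the paper skips---explicitly costing the final reconstruction $\mat{U}_k = \mat{A}_k\mat{Z}_k\mat{P}_k^T\mat{H}$, and stating the mild assumptions ($R\le J$, $M\le K$) under which the lower-order terms $\sum_{k=1}^{K}I_kR^2$, $KR^3$, and $MJR^2$ are absorbed---which tighten the bookkeeping but do not change the argument.
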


\begin{proof}
	The overall time complexity of \method is the summation of the compression cost (see Lemma~\ref{lemma:comp}) and the iteration cost (see Lemma~\ref{lemma:update}): $\T{O}\left(\left(\sum_{k=1}^{K}{I_{k}JR}\right)+JKR^2 + M(JR^2 + KR^3)\right)$.
	Note that $MJR^2$ term is omitted since it is much smaller than $\left(\sum_{k=1}^{K}{I_{k}JR}\right)$ and $JKR^2$.
\end{proof}

\begin{theorem}
	The size of preprocessed data of \method is $\T{O}\left(\left(\sum_{k=1}^{K}{I_k R}\right) + KR^2 + JR\right)$.
\end{theorem}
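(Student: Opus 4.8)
The plan is to directly enumerate the matrices that \method retains after the two-stage compression and sum their sizes, since the preprocessed-data footprint is simply the total storage of the quantities that persist into the iterative phase. First I would identify exactly which matrices constitute the preprocessed data: from Section~\ref{subsec:compression}, the objects actually consumed in every iteration through Equation~\eqref{eq:Xk_final} are the per-slice left singular matrices $\mat{A}_{k} \in \mathbb{R}^{I_k \times R}$ for $k=1,\dots,K$, together with the second-stage factors $\mat{D} \in \mathbb{R}^{J \times R}$, $\mat{E} \in \mathbb{R}^{R \times R}$, and $\mat{F} \in \mathbb{R}^{KR \times R}$. These four families are precisely what must be kept resident.

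Next I would account for each term by its dimensions. Storing $\mat{A}_{k}$ for all $k$ costs $\sum_{k=1}^{K} I_{k} R$, giving the first summand. The matrix $\mat{D}$ contributes $JR$, while $\mat{F}$ contributes $KR \cdot R = KR^2$. The diagonal matrix $\mat{E}$ contributes at most $R^2$ (and only $R$ if one stores merely its diagonal), which is dominated by the $KR^2$ term. Summing these and absorbing the $R^2$ contribution yields $\T{O}\left(\left(\sum_{k=1}^{K}{I_k R}\right) + KR^2 + JR\right)$, exactly as claimed.

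The one point the argument must make explicit — and the only place care is needed — is that the first-stage byproducts $\mat{B}_{k}$ and $\mat{C}_{k}$ and the concatenated matrix $\mat{M} \in \mathbb{R}^{J \times KR}$ are transient rather than preprocessed output: once the second-stage randomized SVD produces $\mat{D}$, $\mat{E}$, $\mat{F}$, the identity $\mat{B}_{k}\mat{C}_{k}^T = \mat{F}^{(k)}\mat{E}\mat{D}^T$ lets \method discard them, so they never enter the persistent footprint. I would likewise note that \method never materializes any reconstruction $\tilde{\mat{X}}_{k} \in \mathbb{R}^{I_k \times J}$, which is what keeps the bound free of any $\sum_{k} I_k J$ term. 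I do not anticipate a genuine obstacle here; the proof is pure bookkeeping of matrix dimensions, and the only subtlety is correctly separating retained factors from discardable intermediates.
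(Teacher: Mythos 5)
Your proposal is correct and follows essentially the same route as the paper's own proof: enumerating the retained factors $\mat{A}_{k}$, $\mat{D}$, $\mat{E}$, and $\mat{F}$ (equivalently the blocks $\mat{F}^{(k)}$) and summing their dimensions to get $\T{O}\left(\left(\sum_{k=1}^{K}{I_k R}\right) + KR^2 + JR\right)$. Your added remarks that $\mat{B}_{k}$, $\mat{C}_{k}$, and $\mat{M}$ are discardable intermediates and that no reconstruction $\tilde{\mat{X}}_{k}$ is ever materialized are consistent with, and slightly more explicit than, the paper's argument.
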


\begin{proof}
	The size of preprocessed data of \method is proportional to the size of $\mat{E}$, $\mat{D}$, $\mat{A}_{k}$, and $\mat{F}^{(k)}$ for $k=1,...,K$.
	The size of $\mat{E}$ and $\mat{D}$ is $R$ and $J\times R$, respectively.
	For each $k$, the size of $\mat{A}$ and $\mat{F}$ is $I_k \times R$ and $R \times R$, respectively.
	Therefore, the size of preprocessed data of \method is $\T{O}\left(\left(\sum_{k=1}^{K}{I_k R}\right) + KR^2 + JR\right)$.
\end{proof}

\section{Experiments}
\label{sec:experiment}

\begin{table}[t!]
	\caption{Description of real-world tensor datasets.
	}
	\centering
	\label{tab:Description}
		\resizebox{0.49\textwidth}{!}{
	\begin{tabular}{lrrrrr}
		\toprule
		\textbf{Dataset} & \textbf{Max Dim. $I_k$} & \textbf{Dim. $J$} & \textbf{Dim. $K$}  & \textbf{Summary} \\
		\midrule
		FMA\footnoteref{foot:fma}~\cite{fma_dataset} &  $704$ & $2,049$ & $7,997$ & music \\
		Urban\footnoteref{foot:urban}~\cite{urban} &  $174$ & $2,049$ & $8,455$ & urban sound  \\		
		US Stock\footnoteref{foot:us} &  $7,883$ & $88$ & $4,742$ & stock \\
		Korea Stock\footnoteref{foot:korea}~\cite{jang2021fast} &  $5,270$ & $88$ & $3,664$ & stock\\
		Activity\footnoteref{foot:activity}~\cite{WangLWY12,Karim2018} &  $553$ & $570$ & $320$ & video feature \\
		Action\footnoteref{foot:activity}~\cite{WangLWY12,Karim2018} &  $936$ & $570$ & $567$ & video feature\\
		Traffic\footnoteref{foot:traffic}~\cite{schimbinschi2015traffic} &  $2,033$ & $96$ & $1,084$ & traffic
		 \\		
		PEMS-SF\footnoteref{foot:pems} &  $963$ & $144$ & $440$ & traffic
		 \\				
		\bottomrule
	\end{tabular}}
\end{table}

\begin{figure*}
	\centering
	\vspace{-3mm}
	 \subfloat{\includegraphics[width=0.5\textwidth]{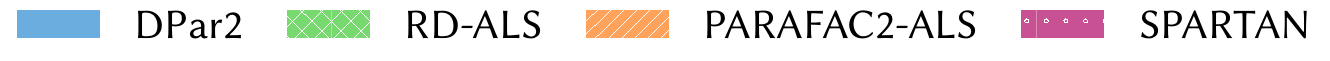}} \\
	\vspace{-3mm}	
	 \setcounter{subfigure}{0}
	 \subfloat[Preprocessing time]{\includegraphics[width=0.42\textwidth]{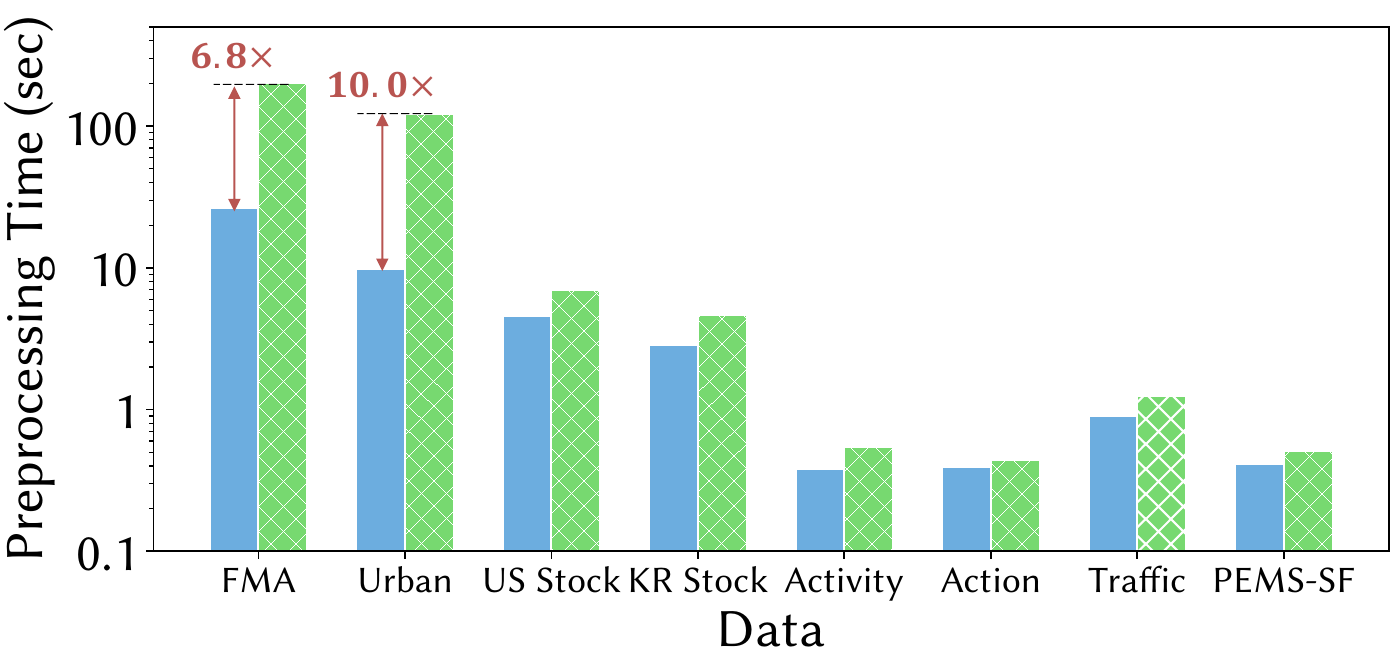}\label{fig:prep_time}}
	 \subfloat[Iteration time]{\includegraphics[width=0.42\textwidth]{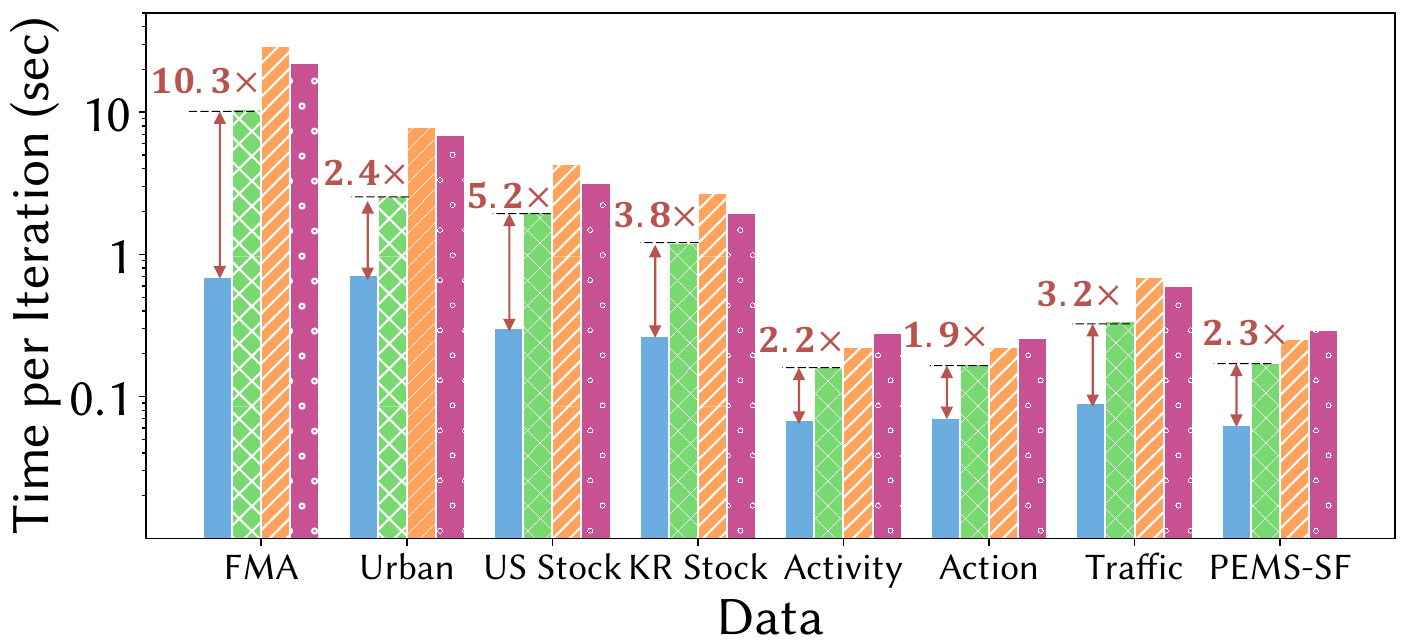}\label{fig:iter_time}}	 	
	 	 \\
	 	\vspace{-1mm}
	\caption{\textbf{[Best viewed in color]}
	(a) \method efficiently preprocesses a given irregular dense tensor, which is up to $10\times$ faster compared to \rdals.
	(b) At each iteration, \method runs by up to $10.3\times$ faster than the second best method.
	}
	\label{fig:performance_bar}
\end{figure*}

In this section, we experimentally evaluate the performance of \method.
We answer the following questions:
\begin{itemize*}
	\item[Q1] \textbf{Performance (Section~\ref{subsec:perf}).} How quickly and accurately does \method perform PARAFAC2 decomposition compared to other methods?
	\item[Q2] \textbf{Data Scalability (Section~\ref{subsec:data_scalability}).} How well does \method scale up with respect to tensor size and target rank?
	\item[Q3] \textbf{Multi-core Scalability (Section~\ref{subsec:machine_scalability}).} How much does the number of threads affect the running time of \method?	
	\item[Q4] \textbf{Discovery (Section~\ref{subsec:discovery}).} What can we discover from real-world tensors using \method?
\end{itemize*}

\subsection{Experimental Settings}
\label{subsec:setting}
We describe experimental settings for the datasets, competitors, parameters, and environments.

\textbf{Machine.}
We use a workstation with $2$ CPUs (Intel Xeon E5-2630 v4 @ 2.2GHz), each of which has $10$ cores, and 512GB memory for the experiments.


\textbf{Real-world Data.}
We evaluate the performance of \method and competitors on real-world datasets summarized in Table~\ref{tab:Description}.
FMA dataset\footnote{\url{https://github.com/mdeff/fma}\label{foot:fma}}~\cite{fma_dataset} is the collection of songs.
Urban Sound dataset\footnote{\url{https://urbansounddataset.weebly.com/urbansound8k.html}\label{foot:urban}}~\cite{urban} is the collection of urban sounds such as drilling, siren, and street music.
For the two datasets, we convert each time series into an image of a log-power spectrogram so that their forms are (time, frequency, song; value) and (time, frequency, sound; value), respectively.
US Stock dataset\footnote{\url{https://datalab.snu.ac.kr/dpar2}\label{foot:us}} is the collection of stocks on the US stock market.
Korea Stock dataset\footnote{\url{https://github.com/jungijang/KoreaStockData}\label{foot:korea}}~\cite{jang2021fast} is the collection of stocks on the South Korea stock market.
Each stock is represented as a matrix of (date, feature) where the feature dimension includes 5 basic features and 83 technical indicators.
The basic features collected daily are the opening, the closing, the highest, and the lowest prices and trading volume, and technical indicators are calculated based on the basic features.
The two stock datasets have the form of (time, feature, stock; value).
Activity data\footnote{\url{https://github.com/titu1994/MLSTM-FCN}\label{foot:activity}} and Action data\footnoteref{foot:activity} are the collection of features for motion videos.
The two datasets have the form of (frame, feature, video; value).
We refer the reader to~\cite{WangLWY12} for their feature extraction.
Traffic data\footnote{\url{https://github.com/florinsch/BigTrafficData}\label{foot:traffic}} is the collection of traffic volume around Melbourne, and its form is (sensor, frequency, time; measurement).
PEMS-SF data\footnote{\url{http://www.timeseriesclassification.com/}\label{foot:pems}} contain the occupancy rate of different car lanes of San Francisco bay area freeways: (station, timestamp, day; measurement).
Traffic data and PEMS-SF data are 3-order regular tensors, but we can analyze them using PARAFAC2 decomposition approaches.

\begin{figure}
\centering
	\includegraphics[width=0.42\textwidth]{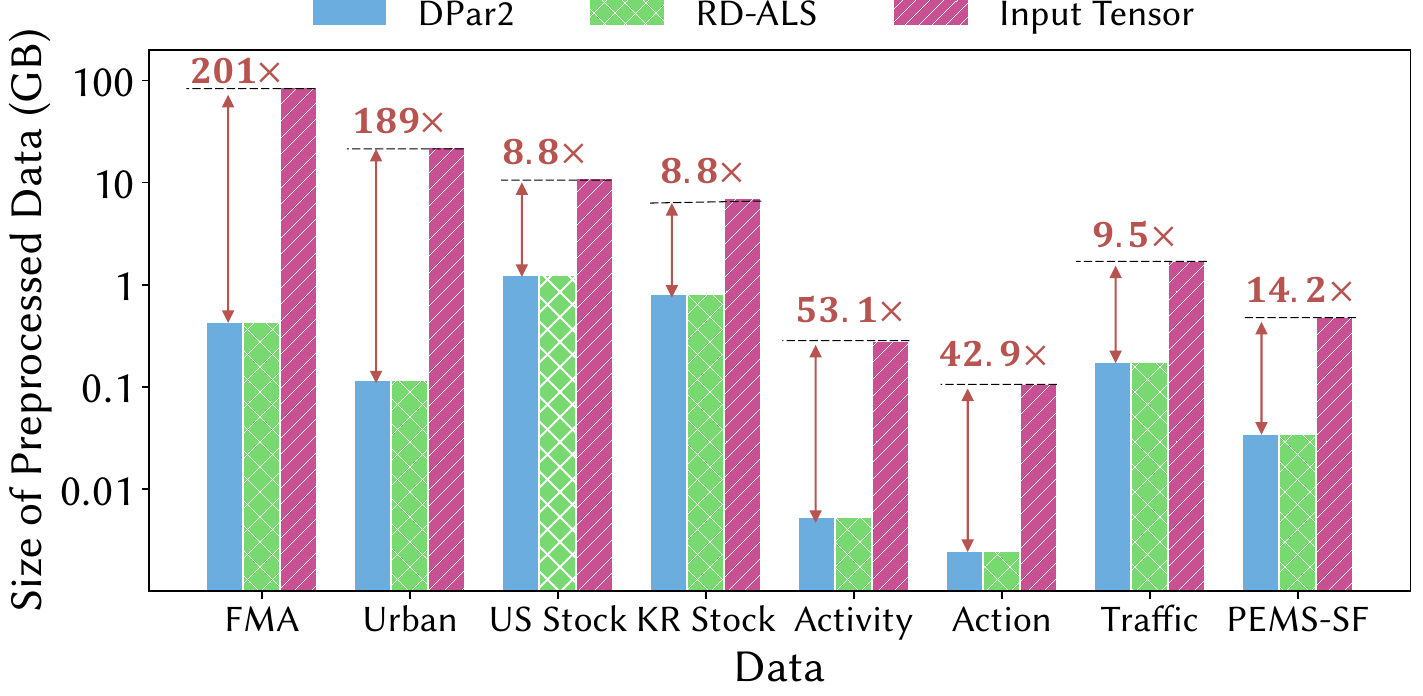}
	\caption{The size of preprocessed data. \method generates up to $201\times$ smaller preprocessed data than input tensors used for \spartan and \als.}
\label{fig:preprocessed_size}
\end{figure}

\textbf{Synthetic Data.}
We evaluate the scalability of \method and competitors on synthetic tensors.
Given the number $K$ of slices, and the slice sizes $I$ and $J$, we generate a synthetic tensor using $\textit{tenrand(I, J, K)}$ function in Tensor Toolbox~\cite{TTB_Software}, which randomly generates a tensor $\T{X} \in \mathbb{R}^{I\times J \times K}$.
We construct a tensor $\{ \mat{X}_k \}_{k=1}^{K}$ where $\mat{X}_{k}$ is equal to $\T{X}(:,:,k)$ for $k=1,...K$.

\textbf{Competitors.}
We compare \method with PARAFAC2 decomposition methods based on ALS.
All the methods including \method are implemented in MATLAB (R2020b).

\begin{itemize}[noitemsep,topsep=0pt]
	\item \textbf{\method}: the proposed PARAFAC2 decomposition model which preprocesses a given irregular dense tensor and updates factor matrices using the preprocessing result.
	\item \textbf{\rdals~\cite{ChengH19}}: PARAFAC2 decomposition which preprocesses a given irregular tensor. Since there is no public code, we implement it using Tensor Toolbox~\cite{TTB_Software} based on its paper~\cite{ChengH19}.
	\item \textbf{\als}: PARAFAC2 decomposition based on ALS approach. It is implemented based on Algorithm~\ref{alg:als} using Tensor Toolbox~\cite{TTB_Software}.
	\item \textbf{\spartan~\cite{PerrosPWVSTS17}}: fast and scalable PARAFAC2 decomposition for irregular sparse tensors. Although it targets on sparse irregular tensors, it can be adapted to irregular dense tensors. We use the code implemented by authors\footnote{\url{https://github.com/kperros/SPARTan}}.
\end{itemize}

\begin{figure*}
\qquad\qquad\raggedright
	 \subfloat{\includegraphics[width=0.55\textwidth]{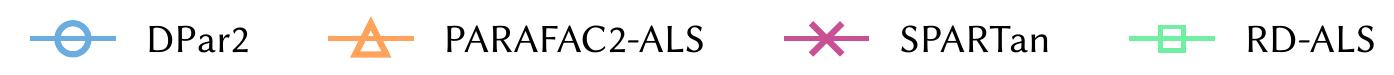}} \\
	\vspace{-3mm}	
	 \setcounter{subfigure}{0}
	\centering	
	 \subfloat[Scalability for tensor size]{\includegraphics[width=0.28\textwidth]{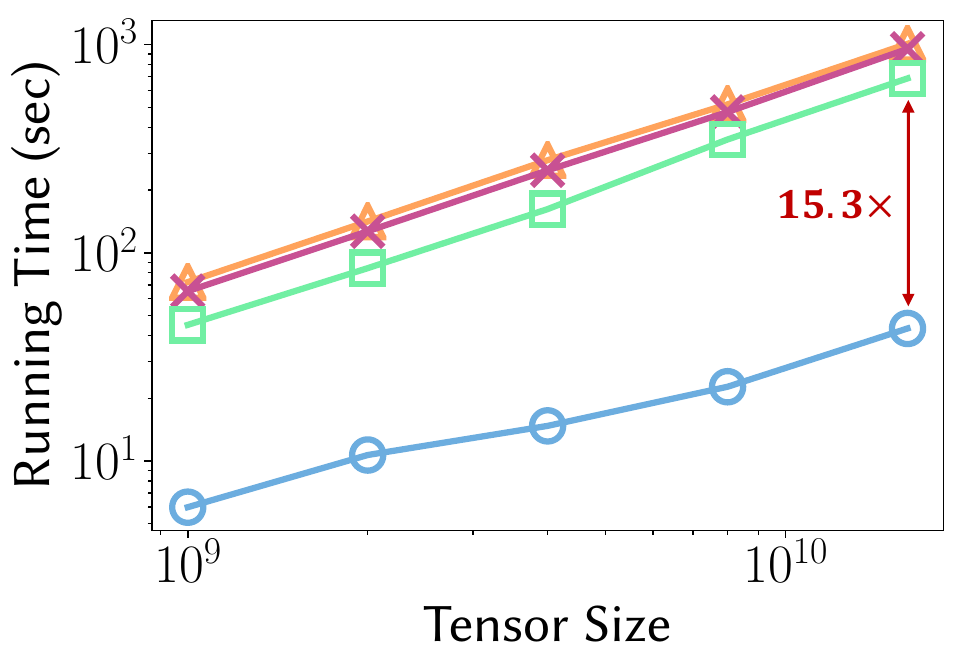}\label{fig:scala_dimen}}
	 \subfloat[Scalability for rank]{\includegraphics[width=0.28\textwidth]{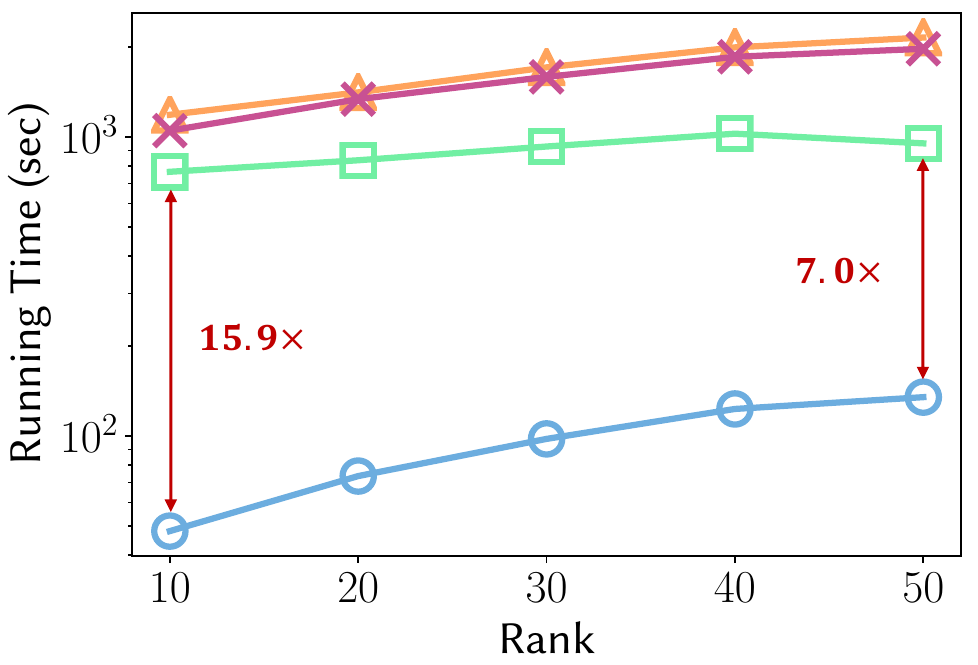}\label{fig:scala_rank}}	
	 	\subfloat[Machine Scalability]{\includegraphics[width=0.29\textwidth]{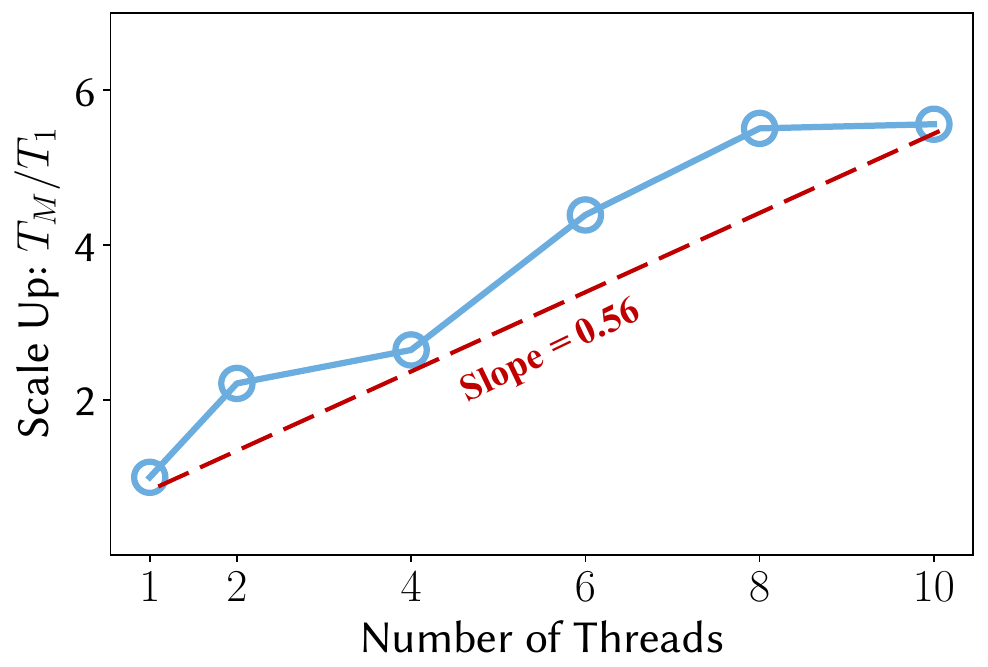}\label{fig:scalability_threads}}
	 	 \\
	\caption{Data scalability.
	\method is more scalable than other PARAFAC2 decomposition methods in terms of both tensor size and rank.
	(a) \method is $15.3\times$ faster than the second-fastest method on the irregular dense tensor of the total size $1.6\times 10^{10}$.
	(b) \method is $7.0\times$ faster than the second-fastest method even when a high target rank is given.
	(c) Multi-core scalability with respect to the number of threads. $T_M$ indicates the running time of \method on the number $M$ of threads. \method gives near-linear scalability, and accelerates $5.5\times$ when the number of threads increases from $1$ to $10$.
	}
	\label{fig:scalability}
\end{figure*}

\textbf{Parameters.}
{
We use the following parameters.
\begin{itemize}[noitemsep,topsep=0pt]
	\item \textbf{Number of threads:} we use $6$ threads except in Section~\ref{subsec:machine_scalability}. 
	\item \textbf{Max number of iterations:} the maximum number of iterations is set to $32$.
	\item \textbf{Rank:} we set the target rank $R$ to $10$ except in the trade-off experiments of Section~\ref{subsec:perf} and Section~\ref{subsec:machine_scalability}.
We also set the rank of randomized SVD to $10$ which is the same as the target rank $R$ of PARAFAC2 decomposition.
\end{itemize}
To compare running time, we run each method $5$ times, and report the average.
}

\textbf{Fitness.}
We evaluate the fitness defined as follows:
\begin{align*}
1 - \left(\frac{\sum_{k=1}^{K} \|\mat{X}_{k}-\hat{\mat{X}}_{k}\|_{\mathbf{F}}^2}{\sum_{k=1}^{K} \|\mat{X}_{k}\|_{\mathbf{F}}^2}\right)
\end{align*}
where $\mat{X}_{k}$ is the $k$-th input slice and $\hat{\mat{X}}_{k}$ is the $k$-th reconstructed slice of PARAFAC2 decomposition.
Fitness close to $1$ indicates that a model approximates a given input tensor well.

\subsection{Performance (Q1)}
\label{subsec:perf}

We evaluate the fitness and the running time of \method, \rdals, \spartan, and \als.

\textbf{Trade-off.}
Fig.~\ref{fig:performance} shows that \method provides the best trade-off of running time and fitness on real-world irregular tensors for the three target ranks: $10$, $15$, and $20$.
\method achieves $6.0\times$ faster running time than the competitors for FMA dataset while having a comparable fitness.
In addition, \method provides at least $1.5\times$ faster running times than the competitors for the other datasets.
The performance gap is large for FMA and Urban datasets whose sizes are larger than those of the other datasets.
It implies that \method is more scalable than the competitors in terms of tensor sizes.
%

\textbf{Preprocessing time.}
We compare \method with \rdals and exclude \spartan and \als since only \rdals has a preprocessing step.
As shown in Fig.~\ref{fig:prep_time}, \method is up to $10\times$ faster than \rdals.
There is a large performance gap on FMA and Urban datasets since \rdals cannot avoid the overheads for the large tensors.
\rdals performs SVD of the concatenated slice matrices $\concat_{k=1}^{K}{\mat{X}^T_k}$, which leads to its slow preprocessing time. 

\textbf{Iteration time.}
Fig.~\ref{fig:iter_time} shows that \method outperforms competitors for running time at each iteration.
Compared to \spartan and \als, \method significantly reduces the running time per iteration due to the small size of the preprocessed results.
Although \rdals reduces the computational cost at each iteration by preprocessing a given tensor, \method is up to $10.3\times$ faster than \rdals.
Compared to \rdals that computes the variation of $\left(\sum_{k=1}^{K} \|\mat{X}_{k}-\mat{Q}_k\mat{H}\mat{S}_{k}\mat{V}^T\|_{\mathbf{F}}^2\right)$ for the convergence criterion, \method efficiently verifies the convergence by computing the variation of $\sum_{k=1}^{K} \|\mat{P}_{k}\mat{Z}_{k}^T\mat{F}^{(k)}\mat{E}\mat{D}^T-\mat{H}\mat{S}_{k}\mat{V}^T \|_{\mathbf{F}}^2$,
which affects the running time at each iteration.
In summary, \method obtains $\mat{U}_{k}$, $\mat{S}_{k}$, and $\mat{V}$ in a reasonable running time even if the number of iterations increases.

\textbf{Size of preprocessed data.}
We measure the size of preprocessed data on real-world datasets.
For \als and \spartan, we report the size of input irregular tensor since they have no preprocessing step.
Compared to an input irregular tensor, \method generates much smaller preprocessed data by up to $201$ times as shown in Fig.~\ref{fig:preprocessed_size}.
Given input slices $\mat{X}_k$ of size $I_k \times J$, the compression ratio increases as the number $J$ of columns increases;
the compression ratio is larger on FMA, Urban, Activity, and Action datasets than on US Stock, KR Stock, Traffic, and PEMS-SF.
This is because the compression ratio is proportional to $\frac{\text{Size of an irregular tensor}}{\text{Size of the preprocessed results}} \approx \frac{IJK}{IKR + KR^2 + JR} = \frac{1}{R/J + R^2/IJ + R/IK}$ assuming $I_1 = ... = I_K = I$; $R/J$ is the dominant term which is much larger than $R^2/IJ$ and $R/IK$.

\subsection{Data Scalability (Q2)}
\label{subsec:data_scalability}
We evaluate the data scalability of \method by measuring the running time on several synthetic datasets.
We first compare the performance of \method and the competitors by increasing the size of an irregular tensor.
Then, we measure the running time by changing a target rank.

\textbf{Tensor Size.}
To evaluate the scalability with respect to the tensor size, we generate $5$ synthetic tensors of the following sizes $I\times J \times K$: $\{ 1000\times 1000\times 1000, 1000\times 1000\times 2000, 2000\times 1000\times 2000, 2000\times 2000\times 2000, 2000\times 2000\times 4000  \}$.
For simplicity, we set $I_1= \cdots = I_K = I$.
Fig.~\ref{fig:scala_dimen} shows that \method is up to $15.3\times$ faster than competitors on all synthetic tensors; in addition, the slope of \method is lower than that of competitors.
Also note that only \method obtains factor matrices of PARAFAC2 decomposition within a minute for all the datasets.

\textbf{Rank.}
To evaluate the scalability with respect to rank, we generate the following synthetic data: $I_1= \cdots = I_K = 2,000$, $J = 2,000$, and $K = 4,000$.
Given the synthetic tensors, we measure the running time for $5$ target ranks: $10$, $20$, $30$, $40$, and $50$.
\method is up to $15.9\times$ faster than the second-fastest method with respect to rank in Fig.~\ref{fig:scala_rank}.
For higher ranks, the performance gap slightly decreases since \method depends on the performance of randomized SVD which is designed for a low target rank.
Still, \method is up to $7.0\times$ faster than competitors with respect to the highest rank used in our experiment.

\begin{figure*}
	\centering
	\vspace{-4mm}
	 \subfloat[US stock data]{\includegraphics[width=0.4\textwidth]{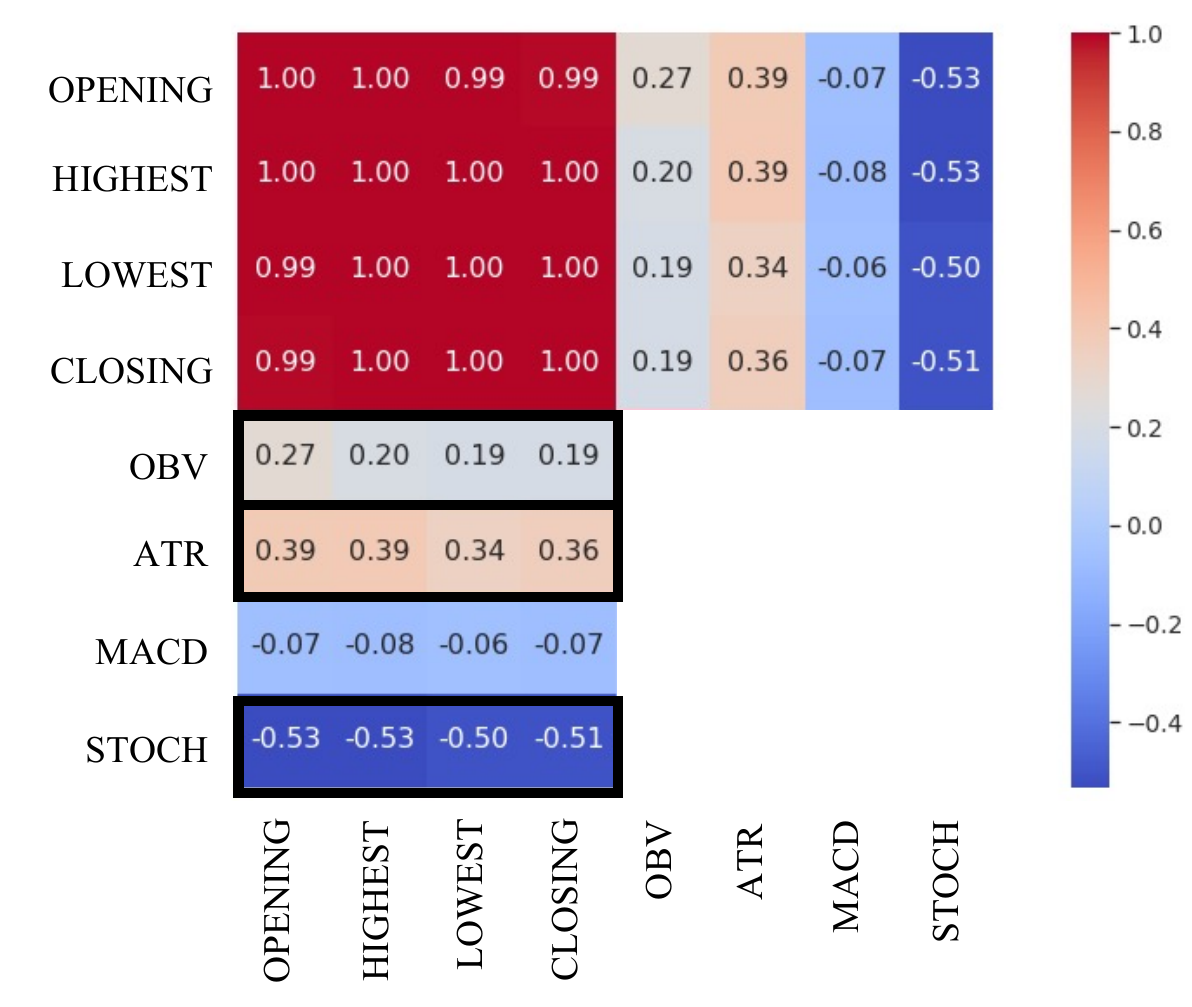}\label{fig:V_sim_us}}
	 \subfloat[Korea stock data]{\includegraphics[width=0.4\textwidth]{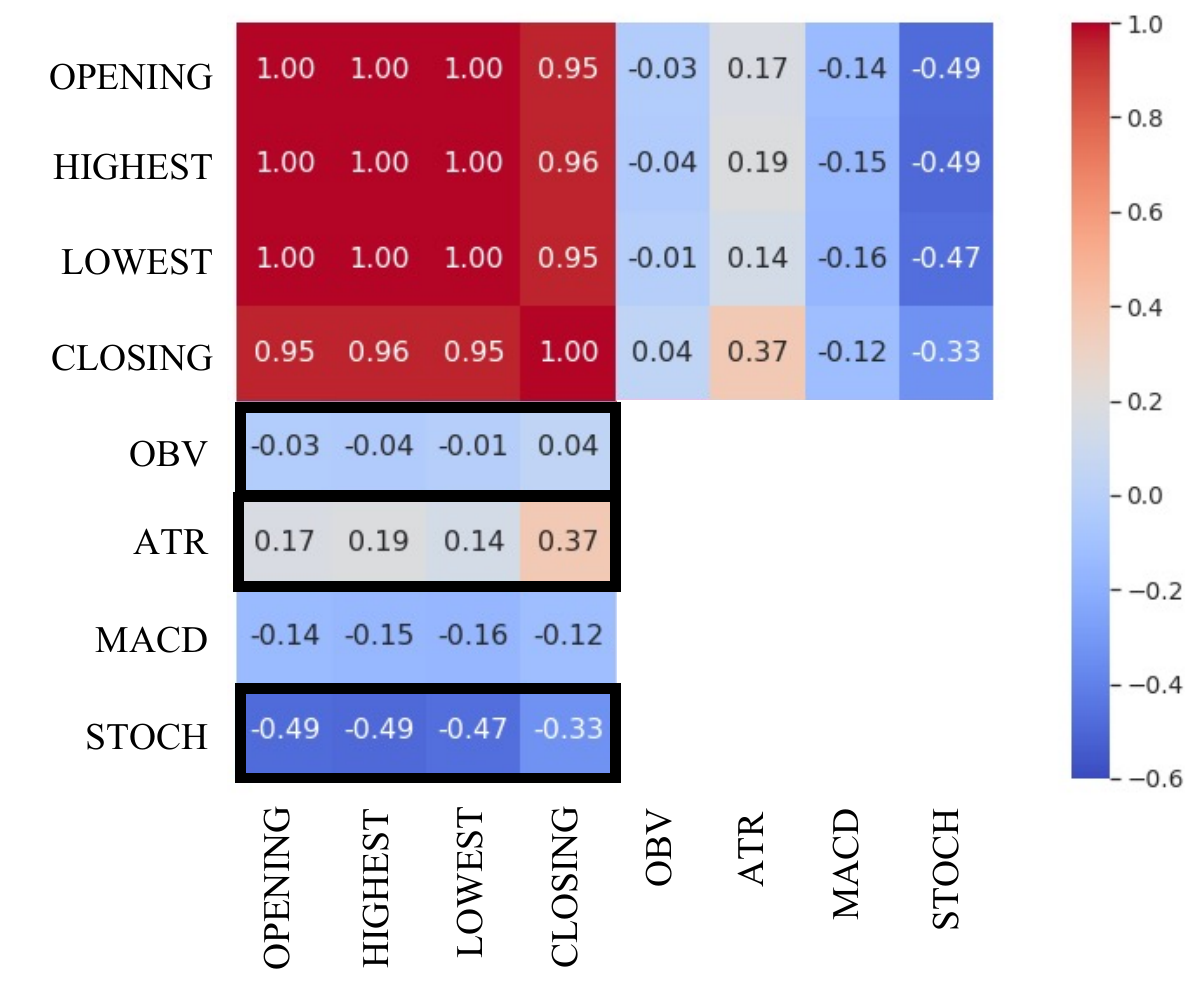}\label{fig:V_sim_kr}}	
	 	 \\
	 	\vspace{-1mm}
	\caption{
The similarity patterns of features are different on the two stock markets.
(a) For US Stock data, ATR and OBV have a positive correlation with the price features.
(b) For Korea Stock data, they are uncorrelated with the price features in general.
	}
	\label{fig:V_sim}
\end{figure*}

\subsection{Multi-core Scalability (Q3)}
\label{subsec:machine_scalability}
We generate the following synthetic data: $I_1= \cdots = I_K = 2,000$, $J = 2,000$, and $K = 4,000$, and evaluate the multi-core scalability of \method with respect to the number of threads: $1,2,4,6,8,$ and $10$.
$T_M$ indicates the running time when using the number $M$ of threads. 
As shown in Fig.~\ref{fig:scalability_threads},
\method gives near-linear scalability, and
accelerates $5.5\times$ when the number of threads increases from $1$ to $10$.

\subsection{Discoveries (Q4)}
\label{subsec:discovery}
We discover various patterns using \method on real-world datasets.

\subsubsection{\textbf{Feature Similarity on Stock Dataset}}
\label{subsec:discovery:feature_similarity}

We measure the similarities between features on US Stock and Korea Stock datasets, and compare the results.
We compute Pearson Correlation Coefficient (PCC) between $\mat{V}(i,:)$, which represents a latent vector of the $i$th feature.
For effective visualization, we select 4 price features (the opening, the closing, the highest, and the lowest prices), and 4 representative technical indicators described as follows:
\begin{itemize}[noitemsep,topsep=0pt]
	\item \textbf{OBV (On Balance Volume)}: a technical indicator for cumulative trading volume. If today's closing price is higher than yesterday's price, OBV increases by the amount of today's volume. If not, OBV decreases by the amount of today's volume.
	\item \textbf{ATR (Average True Range)}: a technical indicator for volatility developed by J. Welles Wilder, Jr.
			It increases in high volatility while decreasing in low volatility.
	\item \textbf{MACD (Moving Average Convergence and Divergence)}: a technical indicator for trend developed by Gerald Appel. It indicates the difference between long-term and short-term exponential moving averages (EMA).			
	\item \textbf{STOCH (Stochastic Oscillator)}: a technical indicator for momentum developed by George Lane.
			It indicates the position of the current closing price compared to the highest and the lowest prices in a duration.
\end{itemize}

Fig.~\ref{fig:V_sim_us} and~\ref{fig:V_sim_kr} show correlation heatmaps for US Stock data and Korea Stock data, respectively.
We analyze correlation patterns between price features and technical indicators.
On both datasets, STOCH has a negative correlation and MACD has a weak correlation with the price features.
On the other hand, OBV and ATR indicators have different patterns on the two datasets.
On the US stock dataset, ATR and OBV have a positive correlation with the price features.
On the Korea stock dataset, OBV has little correlation with the price features.
Also, ATR has little correlation with the price features except for the closing price.
These different patterns are due to the difference of the two markets in terms of market size, market stability, tax, investment behavior, etc.
%

\subsubsection{\textbf{Finding Similar Stocks}}
\label{subsec:discovery:similarity}

\textit{On US Stock dataset, which stock is similar to a target stock $s_T$ in a time range that a user is curious about?}
In this section, we provide analysis by setting the target stock $s_T$ to \textit{Microsoft} (Ticker: MSFT), and the range a duration when the COVID-19 was very active (Jan. 2, 2020 - Apr. 15, 2021).
We efficiently answer the question by 1) constructing the tensor included in the range, 2) obtaining factor matrices with \method, and 3) post-processing the factor matrices of \method.
Since $\mat{U}_{k}$ represents temporal latent vectors of the $k$th stock,
the similarity $sim(s_i, s_j)$ between stocks $s_i$ and $s_j$ is computed as follows:
\begin{align}
\label{eq:similarity}
	sim(s_i, s_j) = \exp\left(-\gamma{{\|\mat{U}_{s_i} - \mat{U}_{s_j}\|_F^2}}\right)
\end{align}
where $\exp$ is the exponential function.
We set $\gamma$ to $0.01$ in this section.
Note that we use only the stocks that have the same target range
since $\mat{U}_{s_i} - \mat{U}_{s_j}$ is defined only when the two matrices are of the same size.

Based on $sim(s_i, s_j)$, we find similar stocks to $s_T$ using two different techniques: 1) $k$-nearest neighbors, and 2) Random Walks with Restarts (RWR).
The first approach simply finds stocks similar to the target stock, while the second one finds similar stocks by considering the multi-faceted relationship between stocks.

\textbf{$k$-nearest neighbors.}
We compute $sim(s_T, s_j)$ for $j=1,...,K$ where $K$ is the number of stocks to be compared, and find top-$10$ similar stocks to $s_T$, \textit{Microsoft} (Ticker: MSFT).
In Table~\ref{tab:similar_dist}, the \textit{Microsoft} stock is similar to stocks of the Technology sector or with a large capitalization (e.g., Amazon.com, Apple, and Alphabet) during the COVID-19.
Moody's is also similar to the target stock.

\begin{table*}[t]
\centering
\caption{
Based on the results of \method, we find similar stocks to Microsoft (MSFT) during the COVID-19.
(a) Top-$10$ stocks from $k$-nearest neighbors.
(b) Top-$10$ stocks from RWR.
The blue color refers to the stocks that appear only in one of the two approaches among the top-10 stocks.
}
\label{tab:similar_discovery}

\resizebox{0.4\textwidth}{!}{
\subtable[Similarity based Result]{
\label{tab:similar_dist}
\begin{tabular}{rrr}
\toprule
Rank & Stock Name & Sector 	\\
\midrule
1 & Adobe  & Technology      		\\
2 & Amazon.com  & Consumer Cyclical     		\\
3 & Apple  &    Technology 		\\
4 & Moody's  & Financial Services      		\\
5 & \textbf{\new{Intuit}} & Technology \\
6 & ANSYS & Technology      		\\
7 & Synopsys & Technology \\
8 & \textbf{\new{Alphabet}}  & Communication Services      		\\
9 & \textbf{\new{ServiceNow}}  & Technology      		\\
10 & \textbf{\new{EPAM Systems}}  & Technology      		\\
\bottomrule
\end{tabular}}}
\subtable[RWR Result]{
\label{tab:similar_ppr}
\resizebox{0.37\textwidth}{!}{
\begin{tabular}{rrr}
\toprule
Rank & Stock Name & Sector 	\\
\midrule
1 & Synopsys & Technology      		\\
2 & ANSYS & Technology      		\\
3 & Adobe & Technology      		\\
4 & Amazon.com	 & Consumer Cyclical      		\\
5 & \textbf{\new{Netflix}} & Communication Services      		\\
6 & \textbf{\new{Autodesk}} & Technology      		\\
7 & Apple & Technology      		\\
8 & Moody's & Financial Services      		\\
9 & \textbf{\new{NVIDIA}}	 & Technology      		\\
10 & \textbf{\new{S\&P Global}} & Financial Services		\\
\bottomrule
\end{tabular}}}
\end{table*}

\textbf{Random Walks with Restarts (RWR).}
We find similar stocks using another approach, Random Walks with Restarts (RWR)~\cite{10.1371/journal.pone.0265001,DBLP:journals/www/JungJPK21,DBLP:journals/kais/JungJK20,10.1371/journal.pone.0213857}.
To exploit RWR, we first a similarity graph based on the similarities between stocks.
The elements of the adjacency matrix $\mat{A}$ of the graph is defined as follows:
\begin{align}
		\mat{A}(i,j) = \begin{cases}
sim(s_i,s_j) &\text{if $i\neq j$}\\
0 &\text{if $i = j$}
\end{cases}
\end{align}
We ignore self-loops by setting $\mat{A}(i,i)$ to $0$ for $i=1,...,K$.

After constructing the graph, we find similar stocks using RWR.
The scores $\mat{r}$ is computed by using the power iteration~\cite{page1999pagerank} as described in~\cite{jung2017bepi}:
\begin{align}
	\mat{r}^{(i)} \leftarrow (1-c)\tilde{\mat{A}}^T\mat{r}^{(i-1)} + c\mat{q}
\end{align}
where $\tilde{\mat{A}}$ is the row-normalized adjacency matrix, $\mat{r}^{(i)}$ is the score vector at the $i$th iteration, $c$ is a restart probability, and $\mat{q}$ is a query vector.
We set $c$ to $0.15$, the maximum iteration to $100$, and $\mat{q}$ to the one-hot vector where the element corresponding to Microsoft is 1, and the others are 0.

As shown in Table~\ref{tab:similar_discovery}, the common pattern of the two approaches is that many stocks among the top-10 belong to the technology sector.
There is also a difference.
In Table~\ref{tab:similar_discovery}, the blue color indicates the stocks that appear only in one of the two approaches among the top-10.
In Table~\ref{tab:similar_dist}, the $k$-nearest neighbor approach simply finds the top-$10$ stocks which are closest to \textit{Microsoft} based on distances.
On the other hand, the RWR approach finds the top-$10$ stocks by considering more complicated relationships.
There are $4$ stocks appearing only in Table~\ref{tab:similar_ppr}.
S\&P Global is included since it is very close to Moody's which is ranked $4$th in Table~\ref{tab:similar_dist}.
Netflix, Autodesk,
and
NVIDIA are relatively far from the target stock compared to stocks such as Intuit and Alphabet, but they are included in the top-$10$ since they are very close to Amazon.com, Adobe, ANSYS, and Synopsys.
This difference comes from the fact that
the $k$-nearest neighbors approach considers only distances from the target stock while the RWR approach considers distances between other stocks in addition to the target stock.
%

\method allows us to efficiently obtain factor matrices, and find interesting patterns in data. 

\section{Related Works}
\label{sec:related}
We review related works on tensor decomposition methods for regular and irregular tensors.

\textbf{Tensor decomposition on regular dense tensors.}
There are efficient tensor decomposition methods on regular dense tensors.
Pioneering works~\cite{JangK20,MalikB18,WangTSA15,Tsourakakis10,BattaglinoBK18,GittensAY20} efficiently decompose a regular tensor by exploiting techniques that reduce time and space costs.
Also, a lot of works~\cite{PhanC11a,LiHCS16,AustinBK16,ChenHWZL17} proposed scalable tensor decomposition methods with parallelization to handle large-scale tensors.
However, the aforementioned methods fail to deal with the irregularity of dense tensors since they are designed for regular tensors.

\textbf{PARAFAC2 decomposition on irregular tensors.}
Cheng and Haardt~\cite{ChengH19} proposed \rdals which preprocesses a given tensor and performs PARAFAC2 decomposition using the preprocessed result.
However, \rdals requires high computational costs to preprocess a given tensor.
Also, \rdals is less efficient in updating factor matrices since it computes reconstruction errors for the convergence criterion at each iteration.
Recent works~\cite{PerrosPWVSTS17,AfsharPPSHS18,Ren00H20} attempted to analyze irregular sparse tensors.
SPARTan~\cite{PerrosPWVSTS17} is a scalable PARAFAC2-ALS method for large electronic health records (EHR) data.
COPA~\cite{AfsharPPSHS18} improves the performance of PARAFAC2 decomposition by applying various constraints (e.g., smoothness).
REPAIR~\cite{Ren00H20} strengthens the robustness of PARAFAC2 decomposition by applying low-rank regularization.
We do not compare \method with COPA and REPAIR since they concentrate on imposing practical constraints to handle irregular sparse tensors, especially EHR data. 
However, we do compare \method with \spartan which the efficiency of COPA and REPAIR is based on. 
TASTE~\cite{afshar2020taste} is a joint PARAFAC2 decomposition method for large temporal and static tensors.
Although the above methods are efficient in PARAFAC2 decomposition for irregular tensors, they concentrate only on irregular sparse tensors, especially EHR data.
LogPar~\cite{yin2020logpar}, a logistic PARAFAC2 decomposition method, analyzes temporal binary data represented as an irregular binary tensor.
SPADE~\cite{gujral2020spade} efficiently deals with irregular tensors in a streaming setting.
TedPar~\cite{yin2021tedpar} improves the performance of PARAFAC2 decomposition by explicitly modeling the temporal dependency.
Although the above methods effectively deal with irregular sparse tensors, especially EHR data,
none of them focus on devising an efficient PARAFAC2 decomposition method on irregular dense tensors.
On the other hand, \method is a fast and scalable PARAFAC2 decomposition method for irregular dense tensors.


\section{Conclusion}
\label{sec:conclusion}
In this paper, we propose \method, a fast and scalable PARAFAC2 decomposition method for irregular dense tensors.
By compressing an irregular input tensor, careful reordering of the operations with the compressed results in each iteration, and careful partitioning of input slices,
\method successfully achieves high efficiency to perform PARAFAC2 decomposition for irregular dense tensors.
Experimental results show that \method is up to $6.0\times$ faster than existing PARAFAC2 decomposition methods while achieving comparable accuracy, and it is scalable with respect to the tensor size and target rank.
With \method, we discover interesting patterns in real-world irregular tensors.
Future work includes devising an efficient PARAFAC2 decomposition method in a streaming setting.

\section*{Acknowledgment}
This work was partly supported by the National Research Foundation of Korea(NRF) funded by 
MSIT(2022R1A2C3007921), 
and Institute of Information \& communications Technology Planning \& Evaluation(IITP) grant funded by MSIT [No.2021-0-01343, Artificial Intelligence Graduate School Program (Seoul National University)] 
and [NO.2021-0-02068, Artificial Intelligence Innovation Hub (Artificial Intelligence Institute, Seoul National University)].
The Institute of Engineering Research and ICT at Seoul National University provided research facilities for this work.
U Kang is the corresponding author.

\bibliographystyle{IEEEtran}
\bibliography{mybib}

\begin{thebibliography}{10}
\providecommand{\url}[1]{#1}
\csname url@samestyle\endcsname
\providecommand{\newblock}{\relax}
\providecommand{\bibinfo}[2]{#2}
\providecommand{\BIBentrySTDinterwordspacing}{\spaceskip=0pt\relax}
\providecommand{\BIBentryALTinterwordstretchfactor}{4}
\providecommand{\BIBentryALTinterwordspacing}{\spaceskip=\fontdimen2\font plus
\BIBentryALTinterwordstretchfactor\fontdimen3\font minus
  \fontdimen4\font\relax}
\providecommand{\BIBforeignlanguage}[2]{{%
\expandafter\ifx\csname l@#1\endcsname\relax
\typeout{** WARNING: IEEEtran.bst: No hyphenation pattern has been}%
\typeout{** loaded for the language `#1'. Using the pattern for}%
\typeout{** the default language instead.}%
\else
\language=\csname l@#1\endcsname
\fi
#2}}
\providecommand{\BIBdecl}{\relax}
\BIBdecl

\bibitem{lin2009metafac}
Y.-R. Lin, J.~Sun, P.~Castro, R.~Konuru, H.~Sundaram, and A.~Kelliher,
  ``Metafac: community discovery via relational hypergraph factorization,'' in
  \emph{KDD}, 2009, pp. 527--536.

\bibitem{spiegel2011link}
S.~Spiegel, J.~Clausen, S.~Albayrak, and J.~Kunegis, ``Link prediction on
  evolving data using tensor factorization,'' in \emph{PAKDD}.\hskip 1em plus
  0.5em minus 0.4em\relax Springer, 2011, pp. 100--110.

\bibitem{jang2021fast}
J.-G. Jang and U.~Kang, ``Fast and memory-efficient tucker decomposition for
  answering diverse time range queries,'' in \emph{Proceedings of the 27th ACM
  SIGKDD Conference on Knowledge Discovery \& Data Mining}, 2021, pp. 725--735.

\bibitem{oh2019high}
S.~Oh, N.~Park, J.-G. Jang, L.~Sael, and U.~Kang, ``High-performance tucker
  factorization on heterogeneous platforms,'' \emph{IEEE Transactions on
  Parallel and Distributed Systems}, vol.~30, no.~10, pp. 2237--2248, 2019.

\bibitem{kwon2021slicenstitch}
T.~Kwon, I.~Park, D.~Lee, and K.~Shin, ``Slicenstitch: Continuous cp
  decomposition of sparse tensor streams,'' in \emph{ICDE}.\hskip 1em plus
  0.5em minus 0.4em\relax IEEE, 2021, pp. 816--827.

\bibitem{DBLP:conf/cikm/AhnKK21}
D.~Ahn, S.~Kim, and U.~Kang, ``Accurate online tensor factorization for
  temporal tensor streams with missing values,'' in \emph{{CIKM} '21: The 30th
  {ACM} International Conference on Information and Knowledge Management,
  Virtual Event, Queensland, Australia, November 1 - 5, 2021}, G.~Demartini,
  G.~Zuccon, J.~S. Culpepper, Z.~Huang, and H.~Tong, Eds.\hskip 1em plus 0.5em
  minus 0.4em\relax {ACM}, 2021, pp. 2822--2826.

\bibitem{Ahn2021}
D.~Ahn, J.-G. Jang, and U.~Kang, ``Time-aware tensor decomposition for sparse
  tensors,'' \emph{Machine Learning}, Sep 2021.

\bibitem{DBLP:conf/cikm/AhnSK20}
D.~Ahn, S.~Son, and U.~Kang, ``Gtensor: Fast and accurate tensor analysis
  system using gpus,'' in \emph{{CIKM} '20: The 29th {ACM} International
  Conference on Information and Knowledge Management, Virtual Event, Ireland,
  October 19-23, 2020}, M.~d'Aquin, S.~Dietze, C.~Hauff, E.~Curry, and
  P.~Cudr{\'{e}}{-}Mauroux, Eds.\hskip 1em plus 0.5em minus 0.4em\relax {ACM},
  2020, pp. 3361--3364.

\bibitem{10.1371/journal.pone.0217316}
D.~Choi, J.-G. Jang, and U.~Kang, ``S3cmtf: Fast, accurate, and scalable method
  for incomplete coupled matrix-tensor factorization,'' \emph{PLOS ONE},
  vol.~14, no.~6, pp. 1--20, 06 2019.

\bibitem{vldbj/Park2019}
N.~Park, S.~Oh, and U.~Kang, ``Fast and scalable method for distributed boolean
  tensor factorization,'' \emph{The VLDB Journal}, Mar 2019.

\bibitem{PerrosPWVSTS17}
I.~Perros, E.~E. Papalexakis, F.~Wang, R.~W. Vuduc, E.~Searles, M.~Thompson,
  and J.~Sun, ``Spartan: Scalable {PARAFAC2} for large {\&} sparse data,'' in
  \emph{SIGKDD}.\hskip 1em plus 0.5em minus 0.4em\relax {ACM}, 2017, pp.
  375--384.

\bibitem{AfsharPPSHS18}
A.~Afshar, I.~Perros, E.~E. Papalexakis, E.~Searles, J.~C. Ho, and J.~Sun,
  ``{COPA:} constrained {PARAFAC2} for sparse {\&} large datasets,'' in
  \emph{CIKM}.\hskip 1em plus 0.5em minus 0.4em\relax {ACM}, 2018, pp.
  793--802.

\bibitem{helwig2017estimating}
N.~E. Helwig, ``Estimating latent trends in multivariate longitudinal data via
  parafac2 with functional and structural constraints,'' \emph{Biometrical
  Journal}, vol.~59, no.~4, pp. 783--803, 2017.

\bibitem{wise2001application}
B.~M. Wise, N.~B. Gallagher, and E.~B. Martin, ``Application of parafac2 to
  fault detection and diagnosis in semiconductor etch,'' \emph{Journal of
  Chemometrics: A Journal of the Chemometrics Society}, vol.~15, no.~4, pp.
  285--298, 2001.

\bibitem{Ren00H20}
Y.~Ren, J.~Lou, L.~Xiong, and J.~C. Ho, ``Robust irregular tensor factorization
  and completion for temporal health data analysis,'' in \emph{CIKM}.\hskip 1em
  plus 0.5em minus 0.4em\relax {ACM}, 2020, pp. 1295--1304.

\bibitem{afshar2020taste}
A.~Afshar, I.~Perros, H.~Park, C.~Defilippi, X.~Yan, W.~Stewart, J.~Ho, and
  J.~Sun, ``Taste: Temporal and static tensor factorization for phenotyping
  electronic health records,'' in \emph{Proceedings of the ACM Conference on
  Health, Inference, and Learning}, 2020, pp. 193--203.

\bibitem{yin2020logpar}
K.~Yin, A.~Afshar, J.~C. Ho, W.~K. Cheung, C.~Zhang, and J.~Sun, ``Logpar:
  Logistic parafac2 factorization for temporal binary data with missing
  values,'' in \emph{SIGKDD}, 2020, pp. 1625--1635.

\bibitem{ChengH19}
Y.~Cheng and M.~Haardt, ``Efficient computation of the {PARAFAC2}
  decomposition,'' in \emph{ACSCC}.\hskip 1em plus 0.5em minus 0.4em\relax
  {IEEE}, 2019, pp. 1626--1630.

\bibitem{KoldaB09}
T.~G. Kolda and B.~W. Bader, ``Tensor decompositions and applications,''
  \emph{{SIAM} Review}, vol.~51, no.~3, pp. 455--500, 2009.

\bibitem{HalkoMT11}
N.~Halko, P.~Martinsson, and J.~A. Tropp, ``Finding structure with randomness:
  Probabilistic algorithms for constructing approximate matrix
  decompositions,'' \emph{{SIAM} Review}, vol.~53, no.~2, pp. 217--288, 2011.

\bibitem{woolfe2008fast}
F.~Woolfe, E.~Liberty, V.~Rokhlin, and M.~Tygert, ``A fast randomized algorithm
  for the approximation of matrices,'' \emph{Applied and Computational Harmonic
  Analysis}, vol.~25, no.~3, pp. 335--366, 2008.

\bibitem{clarkson2017low}
K.~L. Clarkson and D.~P. Woodruff, ``Low-rank approximation and regression in
  input sparsity time,'' \emph{JACM}, vol.~63, no.~6, p.~54, 2017.

\bibitem{harshman1972parafac2}
R.~A. Harshman, ``Parafac2: Mathematical and technical notes,'' \emph{UCLA
  working papers in phonetics}, vol.~22, no. 3044, p. 122215, 1972.

\bibitem{kiers1999parafac2}
H.~A. Kiers, J.~M. Ten~Berge, and R.~Bro, ``Parafac2—part i. a direct fitting
  algorithm for the parafac2 model,'' \emph{Journal of Chemometrics: A Journal
  of the Chemometrics Society}, vol.~13, no. 3-4, pp. 275--294, 1999.

\bibitem{golub2013matrix}
G.~H. Golub and C.~F. Van~Loan, \emph{Matrix computations}.\hskip 1em plus
  0.5em minus 0.4em\relax JHU press, 2013, vol.~3.

\bibitem{fma_dataset}
\BIBentryALTinterwordspacing
M.~Defferrard, K.~Benzi, P.~Vandergheynst, and X.~Bresson, ``{FMA}: A dataset
  for music analysis,'' in \emph{ISMIR}, 2017. [Online]. Available:
  \url{https://arxiv.org/abs/1612.01840}
\BIBentrySTDinterwordspacing

\bibitem{urban}
J.~Salamon, C.~Jacoby, and J.~P. Bello, ``A dataset and taxonomy for urban
  sound research,'' in \emph{Proceedings of the {ACM} International Conference
  on Multimedia, {MM} '14, Orlando, FL, USA, November 03 - 07, 2014}.\hskip 1em
  plus 0.5em minus 0.4em\relax {ACM}, 2014, pp. 1041--1044.

\bibitem{WangLWY12}
J.~Wang, Z.~Liu, Y.~Wu, and J.~Yuan, ``Mining actionlet ensemble for action
  recognition with depth cameras,'' in \emph{2012 {IEEE} Conference on Computer
  Vision and Pattern Recognition, Providence, RI, USA, June 16-21, 2012}.\hskip
  1em plus 0.5em minus 0.4em\relax {IEEE} Computer Society, 2012, pp.
  1290--1297.

\bibitem{Karim2018}
F.~Karim, S.~Majumdar, H.~Darabi, and S.~Harford, ``Multivariate lstm-fcns for
  time series classification,'' 2018.

\bibitem{schimbinschi2015traffic}
F.~Schimbinschi, X.~V. Nguyen, J.~Bailey, C.~Leckie, H.~Vu, and R.~Kotagiri,
  ``Traffic forecasting in complex urban networks: Leveraging big data and
  machine learning,'' in \emph{Big Data (Big Data), 2015 IEEE International
  Conference on}.\hskip 1em plus 0.5em minus 0.4em\relax IEEE, 2015, pp.
  1019--1024.

\bibitem{TTB_Software}
\BIBentryALTinterwordspacing
B.~W. Bader, T.~G. Kolda \emph{et~al.}, ``Matlab tensor toolbox version
  3.0-dev,'' Available online, Oct. 2017. [Online]. Available:
  \url{https://www.tensortoolbox.org}
\BIBentrySTDinterwordspacing

\bibitem{10.1371/journal.pone.0265001}
J.~Jung, J.~Yoo, and U.~Kang, ``Signed random walk diffusion for effective
  representation learning in signed graphs,'' \emph{PLOS ONE}, vol.~17, no.~3,
  pp. 1--19, 03 2022.

\bibitem{DBLP:journals/www/JungJPK21}
J.~Jung, W.~Jin, H.~Park, and U.~Kang, ``Accurate relational reasoning in
  edge-labeled graphs by multi-labeled random walk with restart,'' \emph{World
  Wide Web}, vol.~24, no.~4, pp. 1369--1393, 2021.

\bibitem{DBLP:journals/kais/JungJK20}
J.~Jung, W.~Jin, and U.~Kang, ``Random walk-based ranking in signed social
  networks: model and algorithms,'' \emph{Knowl. Inf. Syst.}, vol.~62, no.~2,
  pp. 571--610, 2020.

\bibitem{10.1371/journal.pone.0213857}
W.~Jin, J.~Jung, and U.~Kang, ``Supervised and extended restart in random walks
  for ranking and link prediction in networks,'' \emph{PLOS ONE}, vol.~14,
  no.~3, pp. 1--23, 03 2019.

\bibitem{page1999pagerank}
L.~Page, S.~Brin, R.~Motwani, and T.~Winograd, ``The pagerank citation ranking:
  Bringing order to the web.'' Stanford InfoLab, Tech. Rep., 1999.

\bibitem{jung2017bepi}
J.~Jung, N.~Park, S.~Lee, and U.~Kang, ``Bepi: Fast and memory-efficient method
  for billion-scale random walk with restart,'' in \emph{Proceedings of the
  2017 ACM International Conference on Management of Data}, 2017, pp. 789--804.

\bibitem{JangK20}
J.~Jang and U.~Kang, ``D-tucker: Fast and memory-efficient tucker decomposition
  for dense tensors,'' in \emph{ICDE}.\hskip 1em plus 0.5em minus 0.4em\relax
  {IEEE}, 2020, pp. 1850--1853.

\bibitem{MalikB18}
O.~A. Malik and S.~Becker, ``Low-rank tucker decomposition of large tensors
  using tensorsketch,'' in \emph{NeurIPS}, 2018, pp. 10\,117--10\,127.

\bibitem{WangTSA15}
Y.~Wang, H.~F. Tung, A.~J. Smola, and A.~Anandkumar, ``Fast and guaranteed
  tensor decomposition via sketching,'' in \emph{NeurIPS}, 2015, pp. 991--999.

\bibitem{Tsourakakis10}
C.~E. Tsourakakis, ``{MACH:} fast randomized tensor decompositions,'' in
  \emph{SDM}, 2010, pp. 689--700.

\bibitem{BattaglinoBK18}
C.~Battaglino, G.~Ballard, and T.~G. Kolda, ``A practical randomized {CP}
  tensor decomposition,'' \emph{{SIAM} J. Matrix Anal. Appl.}, vol.~39, no.~2,
  pp. 876--901, 2018.

\bibitem{GittensAY20}
A.~Gittens, K.~S. Aggour, and B.~Yener, ``Adaptive sketching for fast and
  convergent canonical polyadic decomposition,'' in \emph{ICML}, ser.
  Proceedings of Machine Learning Research, vol. 119.\hskip 1em plus 0.5em
  minus 0.4em\relax {PMLR}, 2020, pp. 3566--3575.

\bibitem{PhanC11a}
A.~H. Phan and A.~Cichocki, ``{PARAFAC} algorithms for large-scale problems,''
  \emph{Neurocomputing}, vol.~74, no.~11, pp. 1970--1984, 2011.

\bibitem{LiHCS16}
X.~Li, S.~Huang, K.~S. Candan, and M.~L. Sapino, ``2pcp: Two-phase {CP}
  decomposition for billion-scale dense tensors,'' in \emph{32nd {IEEE}
  International Conference on Data Engineering, {ICDE} 2016, Helsinki, Finland,
  May 16-20, 2016}, 2016, pp. 835--846.

\bibitem{AustinBK16}
W.~Austin, G.~Ballard, and T.~G. Kolda, ``Parallel tensor compression for
  large-scale scientific data,'' in \emph{IPDPS}, 2016, pp. 912--922.

\bibitem{ChenHWZL17}
D.~Chen, Y.~Hu, L.~Wang, A.~Y. Zomaya, and X.~Li, ``{H-PARAFAC:} hierarchical
  parallel factor analysis of multidimensional big data,'' \emph{{IEEE} Trans.
  Parallel Distrib. Syst.}, vol.~28, no.~4, pp. 1091--1104, 2017.

\bibitem{gujral2020spade}
E.~Gujral, G.~Theocharous, and E.~E. Papalexakis, ``Spade: S treaming pa rafac2
  de composition for large datasets,'' in \emph{SDM}.\hskip 1em plus 0.5em
  minus 0.4em\relax SIAM, 2020, pp. 577--585.

\bibitem{yin2021tedpar}
K.~Yin, W.~K. Cheung, B.~C. Fung, and J.~Poon, ``Tedpar: Temporally dependent
  parafac2 factorization for phenotype-based disease progression modeling,'' in
  \emph{SDM}.\hskip 1em plus 0.5em minus 0.4em\relax SIAM, 2021, pp. 594--602.

\end{thebibliography}

\end{document}